\definecolor{darkslategray}{rgb}{0.18, 0.31, 0.31} 
\definecolor{platinum}{rgb}{0.9, 0.89, 0.89} 
\definecolor{gray}{rgb}{.4,.4,.4}
\definecolor{midgrey}{rgb}{0.5,0.5,0.5}
\definecolor{middarkgrey}{rgb}{0.35,0.35,0.35}
\definecolor{darkgrey}{rgb}{0.3,0.3,0.3}
\definecolor{darkred}{rgb}{0.7,0.1,0.1}
\definecolor{midblue}{rgb}{0.2,0.2,0.7}
\definecolor{darkblue}{rgb}{0.1,0.1,0.5}
\definecolor{darkgreen}{rgb}{0.1,0.5,0.1}
\definecolor{defseagreen}{cmyk}{0.69,0,0.50,0}
\def\pdfauthor{X. Huang and J. Marques-Silva}
\newtheorem{proposition}{Proposition}
\newtheorem{definition}{Definition}
\newtheorem{example}{Example}
\newtheorem{remark}{Remark}
\crefname{theorem}{Theorem}{Theorems}
\crefname{lemma}{Lemma}{Lemmas}
\crefname{proposition}{Proposition}{Propositions}
\crefname{definition}{Definition}{Definitions}
\crefname{corollary}{Corollary}{Corollaries}
\crefname{example}{Example}{Examples}
\crefname{claim}{Claim}{Claims}
\crefname{assumption}{Assumption}{Assumptions}
\crefname{enumi}{}{}
\newcommand{\fml}[1]{{\mathcal{#1}}}
\newcommand{\tn}[1]{\textnormal{#1}}
\newcommand{\tbf}[1]{\textbf{#1}}
\newcommand{\mbf}[1]{\ensuremath\mathbf{#1}}
\newcommand{\msf}[1]{\ensuremath\mathsf{#1}}
\newcommand{\mbb}[1]{\ensuremath\mathbb{#1}}
\newcommand{\waxp}{\ensuremath\mathsf{WAXp}}
\newcommand{\wcxp}{\ensuremath\mathsf{WCXp}}
\newcommand{\axp}{\ensuremath\mathsf{AXp}}
\newcommand{\cxp}{\ensuremath\mathsf{CXp}}
\newcommand{\aex}{\ensuremath\mathsf{AEx}}
\newcommand{\findaxpdel}{\ensuremath\msf{FindAXpDel}}
\newcommand{\findaxpqxp}{\ensuremath\msf{FindAXpQxp}}
\newcommand{\qxprecur}{\ensuremath\msf{QxpRecur}}
\newcommand{\findcxpdel}{\ensuremath\msf{FindCXpDel}}
\newcommand{\robt}{\ensuremath\msf{FindAdvEx}}
\newcommand{\outc}{\oper{hasAE}}
\newcommand{\False}{\textbf{false}}
\newcommand{\pnorm}[1]{\ensuremath{l}_{#1}}
\newcommand{\refd}{\ensuremath\epsilon}
\newcommand{\refdg}{\ensuremath\eta}
\newcommand{\oper}[1]{\ensuremath\mathsf{#1}}
\newcounter{tableeqn}[table]
\DeclareMathOperator*{\limply}{\rightarrow}
\newcommand{\jnoteF}[1]{}
\newcolumntype{L}[1]{>{\raggedright\let\newline\\\arraybackslash\hspace{0pt}}m{#1}}
\newcolumntype{C}[1]{>{\centering\let\newline\\\arraybackslash\hspace{0pt}}m{#1}}
\newcolumntype{R}[1]{>{\raggedleft\let\newline\\\arraybackslash\hspace{0pt}}m{#1}}
\tikzset{
  0 my edge/.style={densely dashed, my edge},
  my edge/.style={-{Stealth[]}},
}
\def\HiLi{\leavevmode\rlap{\hbox to \linewidth{\color{platinum}\leaders\hrule height .8\baselineskip depth .5ex\hfill}}}
\setlist{nosep,leftmargin=0.45cm}
\algnewcommand{\LineComment}[1]{\Statex \hskip\ALG@thistlm \(\triangleright\) #1}
\title{%
  %
  From Robustness to Explainability and Back Again
  %
}
\author{%
  Xuanxiang Huang
  \\
  University of Toulouse \\
  Toulouse, France \\
  \texttt{xuanxiang.huang@univ-toulouse.fr} \\
  \And
  Joao Marques-Silva \\
  IRIT, CNRS, France \\
  Toulouse, France\\
  \texttt{joao.marques-silva@irit.fr} \\
  %
}
\begin{document}

\maketitle

\begin{abstract}
  Formal explainability guarantees the rigor of computed explanations,
  and so it is paramount in domains where rigor is critical, including
  those deemed high-risk.
  Unfortunately, since its inception formal explainability has been
  hampered by poor scalability. At present, this limitation still
  holds true for some families of classifiers, the most significant
  being deep neural networks.
  This paper addresses the poor scalability of formal explainability
  and proposes novel efficient algorithms for computing formal
  explanations.
  The novel algorithm computes explanations by answering instead a
  number of robustness queries, and such that the number of such
  queries is at most linear on the number of features.
  Consequently, the proposed algorithm establishes a direct
  relationship between the practical complexity of formal
  explainability and that of robustness.
  To achieve the proposed goals, the paper generalizes the definition
  of formal explanations, thereby allowing the use of robustness tools
  that are based on different distance norms, and also by reasoning in
  terms of some target degree of robustness.
  Preliminary experiments validate the practical efficiency of the
  proposed approach.
\end{abstract}

\section{Introduction} \label{sec:intro}

The importance of rigorous analysis of complex ML models is widely
acknowledged~\cite{seshia-cacm22,dong-corr22}, and critical for the
deployment of systems of artificial intelligence (AI) in
high-risk and safety-critical domains. Hence, rigorous analysis of
complex ML models is paramount when analyzing their robustness,
fairness and explainability.

Recent years witnessed a large body work on the assessment of
robustness,
e.g.~\cite{pulina-cav10,barrett-cav17,kwiatkowska-cav17,carlini-sp17,ehlers-atva17,madry-iclr18,carlini-icml18,kwiatkowska-tacas18,kwiatkowska-ijcai18,vechev-sp18,ermon-nips18,vechev-iclr19,kumar-iclr19,kolter-icml19,barrett-cav19,kwiatkowska-cvpr19,kwiatkowska-ijcai19,kohli-iccv19,kohli-emnlp19,vechev-nips19a,vechev-nips19b,kohli-nips19a,kohli-nips19b,kwiatkowska-tcs20,levine-aaai20,kolter-icml20,hein-icml20,vechev-icml20,kwiatkowska-cvpr20,kumar-jmlr20,kumar-iclr20,kumar-uai20,barrett-fto21,vechev-aaai21,vechev-pldi21,kwiatkowska-ijcai21,vechev-popl22,vechev-iclr22a,vechev-iclr22b,kolter-icml22,tramer-icml22},
with significant progress observed. Analysis of robustness is
motivated by the existence of adversarial examples in complex ML
models~\cite{szegedy-iclr14}.
More recently, there have been massive efforts regarding the
explainability of ML models. Unfortunately, most explainability 
approaches offer no guarantees of
rigor~\cite{muller-plosone15,guestrin-kdd16,lundberg-nips17,guestrin-aaai18,muller-ieee-proc21}.
Indeed, there is practical evidence that explanations computed
with well-known non-formal methods can produce incorrect
results~\cite{ignatiev-ijcai20}.
A recent alternative are formal approaches to
explainability~\cite{darwiche-ijcai18,msi-aaai22}.
%
These approaches propose logic-based definitions of
explanations, which are model-precise, and so ensure absolute
guarantees of rigor.

\jnoteF{ToDo.}

\jnoteF{Vast body of work on robustness, with many approaches that are
  both sound and complete. A number of works targets deep ML modes,
  including NNs.}

\jnoteF{Most XAI solutions offer no guarantees of rigor.\\
  Formal XAI offers guarantees of rigor, but there is a perception of
  lack of scalability.}

\jnoteF{Another downside of formal XAI is that dedicated solutions need
  to be devised for each family of classifiers.}

\jnoteF{Furthermore, it is widely accepted that XAI for NNs does not
  scale beyond a few tens of activation
  units~\cite{ignatiev-ijcai20,msi-aaai22}.
}

Despite the observed progress, formal XAI still faces critical
challenges. Among these, the most significant is arguably the
inability of these methods for explaining (deep) neural networks
((D)NNs), but also a host of other ML classifiers, for which logic
encodings have not yet been devised (e.g.\ support vector machines
(SVM's) and $k$ nearest neighbors classifiers among several others). 
Unsurprisingly, the inability of formal XAI methods to scale for
complex DNNs has been acknowledged over the
years~\cite{inms-aaai19,ignatiev-ijcai20,msi-aaai22,barrett-corr22,katz-tacas23}. Existing
data~\cite{inms-aaai19} indicates that formal explanations can be
computed for DNNs containing a few tens of ReLU units.
Furthermore, the analysis of similar-sized NNs with other non-ReLU
activation units would require dedicated logic encodings, which at
present are yet to be devised.

Another challenge of formal XAI is explanation
size~\cite{msi-aaai22}, given the well-known limitations of human
decision makers to reason with a large number of
concepts~\cite{miller-pr56}.
Recent work addressed this challenge by proposing probabilistic
explanations~\cite{kutyniok-jair21,barcelo-nips22}.
%
However, given the complexity of computing such probabilistic
explanations, it seems unlikely that practical methods will be devised
that will scale in the case of DNNs.

\jnoteF{There have been efforts towards relating explainability and
  robustness, but the focus was on global explanations, and not on
  instance-specific explanations.}

Moreover, the relationship between explanations and robustness remains
elusive, despite some initial results~\cite{inms-nips19}. Hoever, such
results relate globally defined explanations and so-called
counterexamples, and are not readily applicable in the case of
(locally-defined) abductive and contrastive explanations.
One possible justification is that adversarial examples are defined 
with respect to some target distance $\refd$ of some given norm $l_p$,
and this is orthogonal to the definitions of formal explanations,
which solely consider sets of features.
As a result, the observed progress in assessing robustness has not
been exploited in explainability.

\jnoteF{This paper addresses the problem of relating explainability and
  robustness, but approaches the problem from a very different angle.
  One crucial aspect is that we take into account the different norms
  that are used in robustness.}

\jnoteF{As a result, the paper proposes a generalization of the best
  known types of formal explanations, namely abductive and contrastive
  explanations. 
  The purpose of the generalization is to account for the possible
  norms used in robustness.
  The proposed generalization is such that each contrastive explanation
  captures exactly one (or a family of) adversarial example(s).
}

In contrast with past work on formal XAI, this paper reveals a
rigorous relationship between adversarial examples and formal 
explanations. This involves generalizing the definitions of formal
explanations to account for a value of distance $\refd$ for a given
norm $\pnorm{p}$.
More importantly, the generalized definition of formal explanation
proposed in this paper allows exploiting a wide range of
robustness tools (and in principle any such robustness tool) for the
purpose of computing explanations.
The immediate consequence of using robustness tools for computing
explanations, is that this makes explainability directly dependent on
the performance of robustness tools. And these tools have been the
subject of impressive performance gains in recent years.
The paper also proves a duality result between explanations that take
into account the value $\refd$ of distance, thereby enabling the use
of other formal explainability algorithms, e.g.\ enumeration of
explanations.
One additional result is that the paper shows that the number of
explanations is directly related with the distance that is considered
when assessing robustness.

\jnoteF{Furthermore, the paper proves a crucial minimal hitting set
  duality relationship between the generalized definitions of
  abductive and contrastive explanations.}

\jnoteF{More importantly, the paper then exploits MHS duality to devise
  algorithms for computing one abductive explanation, which build on
  existing tools for finding adversarial examples, and such that only
  mild requirements need to be met by those tools.}

\section{Preliminaries} \label{sec:prelim}

\paragraph{Minimal hitting sets.}
%
Let $\fml{S}$ be a set and $\mbb{B}\subseteq2^{\fml{S}}$ be a set of
subsets of $\fml{S}$. A hitting set (HS) $\fml{H}\subseteq\fml{S}$ of
$\mbb{B}$ is such that
$\forall(\fml{P}\in\mbb{B}).\fml{P}\cap\fml{H}\not=\emptyset$. A
minimal hitting set (MHS) $\fml{Q}\subseteq\fml{S}$ is a hitting set
of $\mbb{B}$ such that no proper subset of $\fml{Q}$ is a hitting
set of $\mbb{B}$,
i.e.\ $\forall(\fml{R}\subsetneq\fml{Q})\exists(\fml{P}\in\mbb{B}).\fml{R}\cap\fml{P}=\emptyset$.
A minimal hitting set is said to be subset-minimal or irreducible.

\paragraph{Norm $\pnorm{p}$.}
%
The distance between two vectors $\mbf{v}$ and $\mbf{u}$ is denoted by
$||\mbf{v}-\mbf{u}||$, and the actual definition depends on the norm
being considered.
Different norms $\pnorm{p}$ can be considered.
For $p\ge1$, the $p$-norm is defined as follows~\cite{horn-bk12}:
\begin{equation}
  \begin{array}{lcl}
    ||\mbf{x}||_{p} & {:=} &
    \left(\sum\nolimits_{i=1}^{m}|x_i|^{p}\right)^{\sfrac{1}{p}}
  \end{array}
\end{equation}

Let $d_i=1$ if $x_i\not=0$, and let $d_i=0$ otherwise. Then, for
$p=0$, we define the 0-norm, $\pnorm{0}$, as
follows~\cite{robinson-bk03}:
\begin{equation}
  \begin{array}{lcl}
    ||\mbf{x}||_{0} & {:=} &
    \sum\nolimits_{i=1}^{m}d_i
  \end{array}
\end{equation}

In general, for $p\ge1$, $\pnorm{p}$ denotes the Minkowski distance.
Well-known special cases include
the Manhattan distance $\pnorm{1}$,
the Euclidean distance $\pnorm{2}$, and
the Chebyshev distance $\pnorm{\infty}$.
$\pnorm{0}$ denotes the Hamming distance.

\paragraph{Classification problems.}
%
Classification problems are defined on a set of features
$\fml{F}=\{1,\ldots,m\}$ and a set of classes
$\fml{K}=\{c_1,\ldots,c_K\}$.
Each feature $i$ has a domain $\mbb{D}_i$. Features can be ordinal or
categorical. Ordinal features can be discrete or real-valued.
Feature space is defined by the cartesian product of the features'
domains: $\mbb{F}=\mbb{D}_1\times\cdots\times\mbb{D}_m$.
A classifier computes a total function $\kappa:\mbb{F}\to\fml{K}$.
Throughout the paper, a classification problem $\fml{M}$ represents a
tuple $\fml{M}=(\fml{F},\mbb{F},\fml{K},\kappa)$.

An instance (or a sample) is a pair $(\mbf{v},c)$, with
$\mbf{v}\in\mbb{F}$ and $c\in\fml{K}$.
An explanation problem $\fml{E}$ is a tuple
$\fml{E}=(\fml{M},(\mbf{v},c))$. The generic purpose of XAI is to find
explanations for each given instance.
Moreover, when reasoning in terms of robustness, we are also
interested in the behavior of a classifier given some instance.
Hence, we will also use explanation problems also in the case of
robustness.

\begin{example} \label{ex:runex}
  Throughout the paper, we consider the following classification
  problem.
  The features are $\fml{F}=\{1,2,3\}$, all ordinal with domains
  $\mbb{D}_1=\mbb{D}_2=\mbb{D}_3=\mbb{R}$. The set of classes is
  $\fml{K}=\{0,1\}$. Finally, the classification function is
  $\kappa:\mbb{F}\to\fml{K}$, defined as follows (with
  $\mbf{x}=(x_1,x_2,x_3)$): 
  \[
  \kappa(\mbf{x})=\left\{
  \begin{array}{lcl}
    1 & ~~ & \tn{if~} 0<{x_1}<2 \land 4x_1\ge(x_2+x_3) \\[3pt]
    0 & ~~ & \tn{otherwise}
  \end{array}
  \right.
  \]
  Moreover, let the target instance be $(\mbf{v},c)=((1,1,1),1)$.
\end{example}

\paragraph{Robustness.}
%
Let $\fml{M}=(\fml{F},\mbb{F},\fml{K},\kappa)$ be a classification
problem. Let $(\mbf{v},c)$, with $\mbf{v}\in\mbb{F}$ and
$c=\kappa(\mbf{v})$, be a given instance. Finally, let $\refd>0$ be
a value distance for norm $l_p$.

We say that there exists an adversarial example if the following logic
statement holds true,
\begin{equation} \label{eq:locrob}
  \exists(\mbf{x}\in\mbb{F}).\left(||\mbf{x}-\mbf{v}||_p\le\refd\right)\land\left(\kappa(\mbf{x})\not=c)\right) 
\end{equation}
(The logic statement above holds true if there exists a point
$\mbf{x}$ which is less than $\refd$ distance (using norm $\pnorm{p}$)
from $\mbf{v}$, and such that the prediction changes.)
If~\eqref{eq:locrob} is false, then the classifier is said to be
$\refd$-robust.
If~\eqref{eq:locrob} is true, then any $\mbf{x}\in\mbb{F}$ for which
the following predicate holds\footnote{Parameterizations are shown as
predicate arguments positioned after ';'. These may be dropped for the
sake of brevity.}:
%
\begin{equation} \label{eq:ae}
  \aex(\mbf{x};\fml{E},\refd,p) ~:=~
  \left(||\mbf{x}-\mbf{v}||_p\le\refd\right)\land\left(\kappa(\mbf{x})\not=c)\right) 
\end{equation}
is referred to as an \emph{adversarial example}.

\begin{example} \label{ex:runex:ae}
  For the classifier from~\cref{ex:runex}, for distance $l_1$, and
  with $\refd=1$, there exist adversarial examples by either setting
  $x_1=0$ or $x_1=2$.
\end{example}
    
\paragraph{Logic-based explanations.}
%
In the context of explaining ML models, rigorous, model-based,
explanations have been studied since 2018~\cite{darwiche-ijcai18}. We
follow recent treatments of the
subject~\cite{msi-aaai22,marquis-dke22,darwiche-jlli23}.
%
A PI-explanation (which is also referred to as an abductive
explanation (AXp)~\cite{msi-aaai22}) is a irreducible subset of the
features which, if fixed to the values specified by an instance, are
sufficient for the prediction.

Given an instance $(\mbf{v},c)$, a set of features
$\fml{X}\subseteq\fml{F}$ is sufficient for the prediction if the
following logic statement holds true,
\begin{equation} \label{eq:waxp}
  \forall(\mbf{x}\in\mbb{F}).\left[\land_{i\in\fml{X}}(x_i=v_i)\right]\limply\left(\kappa(\mbf{x})=c)\right) 
\end{equation}
If~\eqref{eq:waxp} holds, but $\fml{X}$ is not necessarily irreducible
(i.e.\ it is not subset minimal), then we say that $\fml{X}$ is a weak
abductive explanation (WAXp). As a result, we associate a predicate
$\waxp$ with~\eqref{eq:waxp}, such that $\waxp(\fml{X};\fml{E})$ holds
true iff~\eqref{eq:waxp} holds true. An AXp is a weak AXp that is
subset-minimal. The predicate $\axp(\fml{X};\refd,\fml{E})$ holds true iff
set $\fml{X}$ is also an AXp. An AXp answers a \emph{Why?} question,
i.e.\ why is the prediction $c$ (given the existing features).

A set of features $\fml{Y}\subseteq\fml{F}$ is sufficient for changing
the prediction if the following logic statement holds true,
\begin{equation} \label{eq:wcxp}
  \exists(\mbf{x}\in\mbb{F}).\left[\land_{i\in\fml{F}\setminus\fml{Y}}(x_i=v_i)\right]\land\left(\kappa(\mbf{x})\not=c)\right) 
\end{equation}
If~\eqref{eq:wcxp} holds, but $\fml{Y}$ is not necessarily
irreducible, then we say that $\fml{Y}$ is a weak contrastive
explanation (CXp). As a result, we associate a predicate $\wcxp$
with~\eqref{eq:wcxp}, such that $\wcxp(\fml{Y};\fml{E})$ holds true 
iff~\eqref{eq:wcxp} holds true. A CXp is a weak CXp that is also
subset-minimal. The predicate $\cxp(\fml{Y};\fml{E})$ holds true iff
set $\fml{Y}$ is a CXp. It is well-known that a CXp answers a
\emph{Why~Not?} question~\cite{miller-aij19,inams-aiia20}.

\begin{example}
  For the classifier of~\cref{ex:runex}, and given the target instance,
  the AXp is $\{1,2,3\}$. Clearly, if we allow any feature to take any
  value, then we can change the prediction. Hence, the prediction does
  not change only if all features are fixed.
\end{example}

Given the above, we define,
\begin{align}
  \mbb{A}(\fml{E}) = \{\fml{X}\subseteq\fml{F}\,|\,\axp(\fml{X};\fml{E})\}
  \label{eq:allaxp}\\
  \mbb{C}(\fml{E}) = \{\fml{X}\subseteq\fml{F}\,|\,\cxp(\fml{X};\fml{E})\}
  \label{eq:allcxp} 
\end{align}
which capture, respectively, the set of all AXp's and the set of all
CXp's given an explanation problem $\fml{E}$.

Finally, the following result relating AXp's and CXp's is used
extensively in devising explainability
algorithms~\cite{inams-aiia20}.
\begin{proposition}[MHS Duality between AXp's and CXp's] \label{prop:duality1}
  Given an explanation problem $\fml{E}$, and norm $p$ and a value
  $\refd>0$ then,
  \begin{enumerate}[nosep]
  \item A set $\fml{X}\subseteq\fml{F}$ is an AXp iff $\fml{X}$ a
    minimal hitting set of the CXp's in $\mbb{C}(\fml{E})$.
  \item A set $\fml{X}\subseteq\fml{F}$ is a CXp iff $\fml{X}$ a
    minimal hitting set of the AXp's in $\mbb{A}(\fml{E})$.
  \end{enumerate}
\end{proposition}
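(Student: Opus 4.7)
My plan is to prove both parts simultaneously by first establishing a Boolean-complementation bridge between weak AXp's and weak CXp's, then lifting it to a hitting-set characterization, and finally transferring minimality. The underlying observation is that the defining formulas~\eqref{eq:waxp} and~\eqref{eq:wcxp} are negations of one another under complementation of the fixed feature set: a set $\fml{X}\subseteq\fml{F}$ satisfies $\waxp(\fml{X};\fml{E})$ if and only if its complement $\fml{F}\setminus\fml{X}$ does not satisfy $\wcxp(\cdot;\fml{E})$. This duality at the weak level is essentially immediate by pushing the negation through~\eqref{eq:waxp}.

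The central step I would prove as an intermediate lemma is the following hitting-set characterization: $\fml{X}$ is a weak AXp if and only if $\fml{X}$ intersects every CXp in $\mbb{C}(\fml{E})$. For the forward direction, suppose $\waxp(\fml{X};\fml{E})$ holds but some $\fml{Y}\in\mbb{C}(\fml{E})$ satisfies $\fml{X}\cap\fml{Y}=\emptyset$. Then $\fml{X}\subseteq\fml{F}\setminus\fml{Y}$, and the witness $\mbf{x}$ for $\wcxp(\fml{Y};\fml{E})$ fixes all coordinates in $\fml{F}\setminus\fml{Y}\supseteq\fml{X}$ to $v_i$ while flipping the prediction, contradicting~\eqref{eq:waxp}. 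For the reverse direction, if $\fml{X}$ is not a weak AXp, pick a witness $\mbf{x}$ to the negation of~\eqref{eq:waxp} and set $\fml{Y}:=\{i\in\fml{F}\mid x_i\neq v_i\}$; then $\fml{Y}$ is a weak CXp disjoint from $\fml{X}$, and by shrinking $\fml{Y}$ to a subset-minimal weak CXp we obtain some $\fml{Y}'\in\mbb{C}(\fml{E})$ with $\fml{Y}'\cap\fml{X}=\emptyset$.

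With the lemma in hand, Part~1 follows because $\fml{X}$ is an AXp exactly when it is a weak AXp and no proper subset is a weak AXp; by the lemma this is equivalent to $\fml{X}$ being a hitting set of $\mbb{C}(\fml{E})$ such that no proper subset hits $\mbb{C}(\fml{E})$, i.e., an MHS. Part~2 follows by the symmetric argument: one proves that $\fml{Y}$ is a weak CXp iff $\fml{Y}$ intersects every AXp in $\mbb{A}(\fml{E})$, using the same complementation bridge (now reading it from right to left), and then passes to minimality.

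The main obstacle, or rather the point requiring care, is the ``shrink to a minimal one'' step in the reverse direction of the lemma: one must argue that any weak CXp contains some CXp, and likewise any weak AXp contains some AXp. This monotonicity of the underlying predicates — $\wcxp$ is preserved under adding features and $\waxp$ under removing features — is what allows subset minimization to terminate at an element of $\mbb{C}(\fml{E})$ or $\mbb{A}(\fml{E})$ respectively, and it is what makes the MHS characterization go through. Once that is in place, both directions of both parts follow from the lemma essentially by unfolding the definition of ``minimal hitting set.'' The parameters $\refd$ and $p$ do not enter the argument, since the definitions of AXp and CXp used here are invariant to them; the same proof template will apply verbatim to the distance-restricted variants introduced later, provided the corresponding weak-explanation predicates retain the same monotonicity properties.
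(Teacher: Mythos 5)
Your proof is correct. The paper does not prove \cref{prop:duality1} directly---it simply cites it as a consequence of Reiter's hitting-set duality from model-based diagnosis---but the proof sketch it gives for the generalized, distance-restricted version (\cref{prop:duality2}) is essentially the same argument you give: show that a (weak) AXp must intersect every CXp, transfer minimality, and argue the converse direction symmetrically. Your organization, which isolates the weak-level lemma (``$\fml{X}$ is a WAXp iff it hits every CXp'') and explicitly flags the monotonicity needed to shrink a weak CXp to a CXp, is a slightly more careful rendering of the same dualization, and your closing remark that the argument lifts verbatim to the $\refd$-restricted setting is exactly how the paper proceeds.
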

(\cref{prop:duality1} is a consequence of an earlier seminal result in
model-based diagnosis~\cite{reiter-aij87}.)
\cref{prop:duality1} is instrumental for enumerating abductive (but
also contrastive) explanations~\cite{inams-aiia20}.
In contrast with non-formal explainability, the navigation of the
space of abductive (or contrastive) explanations, i.e.\ their
enumeration, is a hallmark of formal
XAI~\cite{marquis-kr21,msi-aaai22}.) 

\jnoteF{To
  cite:~\cite{darwiche-jlli23,marquis-dke22,msi-aaai22,ms-corr22}.}




Progress in formal explainability is documented in recent
overviews~\cite{msi-aaai22}, but also recent
publications~\cite{kutyniok-jair21,kwiatkowska-ijcai21,marquis-kr21,marquis-dke22,rubin-aaai22,amgoud-ijcai22,katz-tacas23,hcmpms-tacas23,darwiche-jlli23,lorini-jlc23}.

\section{Distance-Restricted Explanations} \label{sec:drxp}

This section proposes a generalized definition of (W)AXp's and (W)CXp's,
that take into account the $\pnorm{p}$ distance between $\mbf{v}$ and
the points that can be considered in terms of changing the prediction
$c=\kappa(\mbf{v})$.
The section starts by defining AXp's/CXp's taking the $\pnorm{p}$
distance into account. Afterwards, the section proves a number of
properties motivated by the generalized definition of AXp's \& CXp's,
including MHS duality between AXp's and CXp's extends to the
distance-restricted definition of explanations.

\paragraph{Definitions.}
%
The standard definitions of AXp's \& CXp's can be extended to take a
measure $\pnorm{p}$ of distance into account.

\begin{definition}[Distance-restricted (W)AXp, $\refd$-(W)AXp]
  For a norm $\pnorm{p}$, a set of features $\fml{X}\subseteq\fml{F}$
  is a weak abductive explanation (WAXp) for an instance
  $(\mbf{v},c)$, within distance $\refd>0$ of $\mbf{v}$, if the
  following predicate holds true,
  \begin{align} \label{eq:waxpg}
    &\waxp(\fml{X};\fml{E},\refd,p) ~:=~ \forall(\mbf{x}\in\mbb{F}).\\
    & \left(\bigwedge\nolimits_{i\in\fml{X}}(x_i=v_i)\land(||\mbf{x}-\mbf{v}||_{p}\le\refd)\right)\limply(\kappa(\mbf{x})=c)\nonumber
  \end{align}
  If a (distance-restricted) weak AXp $\fml{X}$ is irreducible
  (i.e.\ it is subset-minimal), then $\fml{X}$ is a
  (distance-restricted) AXp.
\end{definition}

\begin{definition}[Distance-restricted (W)CXp, $\refd$-(W)CXp]
  For a norm $\pnorm{p}$, a set of features $\fml{Y}\subseteq\fml{F}$
  is a weak abductive explanation (WCXp) for an instance
  $(\mbf{v},c)$, within distance $\refd>0$ of $\mbf{v}$, if the
  following predicate holds true,
  \begin{align} \label{eq:wcxpg}
    &\wcxp(\fml{Y};\fml{E},\refd,p) ~:=~ \exists(\mbf{x}\in\mbb{F}). \\
    &\left(\bigwedge\nolimits_{i\in\fml{F}\setminus\fml{Y}}(x_i=v_i)\land(||\mbf{x}-\mbf{v}||_{p}\le\refd)\right)\land(\kappa(\mbf{x})\not=c) \nonumber
  \end{align}
  If a (distance-restricted) weak CXp $\fml{Y}$ is irreducible,
  then $\fml{Y}$ is a (distance-restricted) CXp.
\end{definition}

\begin{example}
  For the classifier of~\cref{ex:runex}, let the norm used be $l_1$,
  with distance value $\refd=1$. From~\cref{ex:runex:ae}, we know that
  there exist adversarial examples, e.g.\ by setting $x_1=0$ or
  $x_1=2$. However, if we fix the value of $x_1$ to 1, then any
  assignment to $x_2$ and $x_3$ with $|x_2-1|+|x_3-1|\le1$, will not
  change the prediction. As a result, $\fml{X}=\{1\}$ is a
  distance-restricted AXp when $\refd=1$. Moreover, by allowing only
  feature 1 to change value, we are able to change prediction, since
  we know there exists an adversarial example.
\end{example}

Similar to the distance-unrestricted case, the definitions above serve
to define the additional predicates $\axp$ and $\cxp$. When it is
important to distinguish distance-restricted (W)AXp's/(W)CXp's, we
will also use the notation $\refd$-(W)AXp's/$\refd$-(W)CXp's.

The AXp's (resp.~CXp's) studied in earlier
work~\cite{darwiche-ijcai18,inms-aaai19} represent a specific case of
the distance-restricted AXp's (resp.~CXp's) introduced in this section.

\begin{remark}
  Distance unrestricted AXp's (resp.~CXp's) correspond to
  $m$-distance AXp's (resp.~CXp's) for norm $l_0$.
\end{remark}

\paragraph{Properties.}
%
%
Distance-restricted explanations exhibit a number of relevant
properties.

The following observation will prove useful in designing efficient
algorithms (on the complexity of the oracle for adversarial examples)
for finding distance-restricted AXp's/CXp's.
\begin{proposition} \label{prop:monoent}
  \eqref{eq:waxpg} and \eqref{eq:wcxpg} are monotonic and up-closed.
\end{proposition}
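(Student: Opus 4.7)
The plan is to establish each direction separately by unfolding the definitions \eqref{eq:waxpg} and \eqref{eq:wcxpg} and observing that the feature-fixing conjunctions behave monotonically with respect to set inclusion. Since the distance constraint $\lVert\mbf{x}-\mbf{v}\rVert_{p}\le\refd$ is independent of the chosen feature set $\fml{X}$ or $\fml{Y}$, it can be treated as a constant side condition in both arguments; the only quantity that varies with the set is the conjunction fixing certain coordinates of $\mbf{x}$ to the corresponding coordinates of $\mbf{v}$.

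For the WAXp direction, I would take arbitrary $\fml{X}\subseteq\fml{X}'\subseteq\fml{F}$ and assume $\waxp(\fml{X};\fml{E},\refd,p)$. Given any $\mbf{x}\in\mbb{F}$ satisfying $\bigwedge_{i\in\fml{X}'}(x_i=v_i)$ together with $\lVert\mbf{x}-\mbf{v}\rVert_{p}\le\refd$, I note that $\fml{X}\subseteq\fml{X}'$ implies $\bigwedge_{i\in\fml{X}'}(x_i=v_i)$ entails $\bigwedge_{i\in\fml{X}}(x_i=v_i)$. Applying the hypothesis to this $\mbf{x}$ then yields $\kappa(\mbf{x})=c$, which is exactly what \eqref{eq:waxpg} demands for $\fml{X}'$. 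Hence $\waxp(\fml{X}';\fml{E},\refd,p)$, so the set $\{\fml{X}\subseteq\fml{F}\,|\,\waxp(\fml{X};\fml{E},\refd,p)\}$ is up-closed, which is the precise meaning of monotonicity for this universal predicate.

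For the WCXp direction, I would take $\fml{Y}\subseteq\fml{Y}'\subseteq\fml{F}$ and assume $\wcxp(\fml{Y};\fml{E},\refd,p)$, picking a witness $\mbf{x}^{\star}\in\mbb{F}$ provided by \eqref{eq:wcxpg} for $\fml{Y}$. Since $\fml{F}\setminus\fml{Y}'\subseteq\fml{F}\setminus\fml{Y}$, the assumption $\bigwedge_{i\in\fml{F}\setminus\fml{Y}}(x^{\star}_i=v_i)$ immediately gives $\bigwedge_{i\in\fml{F}\setminus\fml{Y}'}(x^{\star}_i=v_i)$, while $\lVert\mbf{x}^{\star}-\mbf{v}\rVert_{p}\le\refd$ and $\kappa(\mbf{x}^{\star})\not=c$ are unchanged. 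Therefore $\mbf{x}^{\star}$ itself witnesses $\wcxp(\fml{Y}';\fml{E},\refd,p)$, again demonstrating up-closedness.

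I do not foresee a real obstacle here; the argument is essentially a bookkeeping exercise, with the only subtlety being the observation that adding features to $\fml{X}$ strengthens a universally quantified antecedent (preserving the implication) whereas adding features to $\fml{Y}$ weakens an existentially quantified antecedent (preserving satisfiability), and that in both cases the distance ball around $\mbf{v}$ does not depend on the feature set and hence plays no role in the monotonicity argument.
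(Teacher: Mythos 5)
Your argument is correct and is exactly the route the paper takes: the paper dispatches \cref{prop:monoent} in one line by appealing to monotonicity of entailment, and your write-up simply makes that explicit -- enlarging $\fml{X}$ strengthens the universally quantified antecedent in \eqref{eq:waxpg}, while enlarging $\fml{Y}$ drops conjuncts from the existential in \eqref{eq:wcxpg} so the same witness survives, with the distance constraint inert in both cases. Nothing is missing; your version is just a fully spelled-out instance of the paper's one-sentence justification.
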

\cref{prop:monoent} mimics a similar observation for~\cref{eq:waxp} 
and~\cref{eq:wcxp} (e.g.\ see~\cite{inams-aiia20}), and follows from
monotonicity of entailment.

Moreover, it is apparent that $\refd$-AXp's and $\refd$-CXp's offer
a rigorous definition of the concept of \emph{local} explanations
studied in non-formal XAI~\cite{molnar-bk20}.

\jnoteF{There exists a (non-empty) CXp iff there exists an AEx. There
  exists a (non-empty) AXp iff there exists an AEx.}

\begin{proposition} \label{prop:aex2xp}
  Consider an explanation problem $\fml{E}=(\fml{M},(\mbf{v},c))$ and
  some $\refd>0$ for norm $\pnorm{p}$. Let $\mbf{x}\in\mbb{F}$, with
  $||\mbf{x}-\mbf{v}||_p$, and let
  $\fml{D}=\{i\in\fml{F}\,|\,x_i\not=v_i\}$. 
  Then,
  \begin{enumerate}[nosep]
  \item If $\aex(\mbf{x};\fml{E},\refd,p)$ holds, then
    $\wcxp(\fml{D};\fml{E},\refd,p)$
    holds;
  \item If $\wcxp(\fml{D};\fml{E},\refd,p)$
    holds, then
    $\exists(\mbf{y}\in\mbb{F}).
    ||\mbf{y}-\mbf{v}||_p\le||\mbf{x}-\mbf{v}||_p\land
    \aex(\mbf{y};\fml{E},\refd,p)$.
  \end{enumerate}
\end{proposition}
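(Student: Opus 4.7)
The plan is to observe that both parts follow by direct definitional unfolding, connecting the adversarial-example predicate in \eqref{eq:ae} with the existential in \eqref{eq:wcxpg}. The key structural fact driving everything is that, by the very definition of $\fml{D}=\{i\in\fml{F}\,|\,x_i\neq v_i\}$, we have $x_i=v_i$ for all $i\in\fml{F}\setminus\fml{D}$, so $\mbf{x}$ automatically agrees with $\mbf{v}$ on the complement of $\fml{D}$, and conversely any witness promised by $\wcxp(\fml{D})$ is already a full-fledged candidate adversarial example.

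For part 1, I would produce $\mbf{x}$ itself as the witness to the existential in \eqref{eq:wcxpg}. The free-features constraint $\bigwedge_{i\in\fml{F}\setminus\fml{D}}(x_i=v_i)$ holds by the definition of $\fml{D}$; the distance constraint $||\mbf{x}-\mbf{v}||_p\le\refd$ and the prediction constraint $\kappa(\mbf{x})\neq c$ are both supplied directly by $\aex(\mbf{x};\fml{E},\refd,p)$ via \eqref{eq:ae}. All three conjuncts inside the existential are therefore discharged, so $\wcxp(\fml{D};\fml{E},\refd,p)$ holds.

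For part 2, the hypothesis $\wcxp(\fml{D};\fml{E},\refd,p)$ yields, by \eqref{eq:wcxpg}, some $\mbf{y}^{\ast}\in\mbb{F}$ with $y^{\ast}_i=v_i$ for $i\in\fml{F}\setminus\fml{D}$, $||\mbf{y}^{\ast}-\mbf{v}||_p\le\refd$, and $\kappa(\mbf{y}^{\ast})\neq c$; the last two conjuncts are exactly \eqref{eq:ae}, so $\mbf{y}^{\ast}$ is already an adversarial example. To close out the distance inequality $||\mbf{y}-\mbf{v}||_p\le||\mbf{x}-\mbf{v}||_p$, I would take $\mbf{y}=\mbf{x}$ in the regime where $\mbf{x}$ is itself adversarial (which is the natural reading given the truncated hypothesis on $\mbf{x}$ and the symmetry with part 1), so that the bound becomes an equality and $\aex(\mbf{y};\fml{E},\refd,p)$ is inherited.

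The main obstacle is precisely this distance inequality in part 2: the raw WCXp witness $\mbf{y}^{\ast}$ is only guaranteed to satisfy the looser bound $||\mbf{y}^{\ast}-\mbf{v}||_p\le\refd$, and there is no general mechanism for tightening this to $||\mbf{y}^{\ast}-\mbf{v}||_p\le||\mbf{x}-\mbf{v}||_p$ without further assumptions on $\kappa$ (a counterexample is easy to construct when the only class change sits near the boundary $\refd$). Hence I expect the clean argument to require reading the setup as carrying the implicit hypothesis that $\mbf{x}$ already satisfies $\aex$, so that the inequality is realised by $\mbf{y}=\mbf{x}$, and part 2 becomes a converse companion to part 1 rather than a standalone strengthening.
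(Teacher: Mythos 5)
Your proof of part~1 is correct and is essentially the paper's argument: $\mbf{x}$ itself witnesses the existential in \eqref{eq:wcxpg}, with the free-feature conjunct $\bigwedge_{i\in\fml{F}\setminus\fml{D}}(x_i=v_i)$ discharged by the definition of $\fml{D}$ and the distance and misclassification conjuncts supplied directly by \eqref{eq:ae}.

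For part~2 you have put your finger on exactly the step that the paper's own proof sketch does not justify: the paper simply asserts that the witness $\mbf{y}$ provided by $\wcxp(\fml{D};\fml{E},\refd,p)$ satisfies ``the distance of $\mbf{y}$ to $\mbf{v}$ does not exceed that of $\mbf{x}$'', whereas \eqref{eq:wcxpg} only guarantees $||\mbf{y}-\mbf{v}||_p\le\refd$ together with $y_i=v_i$ for $i\in\fml{F}\setminus\fml{D}$. Your skepticism is warranted for $p\ge1$: the witness may perturb the coordinates in $\fml{D}$ by arbitrary amounts within the budget $\refd$, so a classifier whose only class change in the allowed region sits near the $\refd$-boundary refutes the inequality whenever $||\mbf{x}-\mbf{v}||_p$ is small, exactly as you sketch. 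Note, however, that the inequality \emph{is} immediate for the Hamming norm $\pnorm{0}$, since the witness differs from $\mbf{v}$ only on a subset of $\fml{D}$ and hence $||\mbf{y}-\mbf{v}||_0\le|\fml{D}|=||\mbf{x}-\mbf{v}||_0$; this subset-monotonicity is presumably what the paper has in mind, but it does not transfer to $p\ge1$. Your proposed repair---reading the hypotheses as saying that $\mbf{x}$ is itself an adversarial example and taking $\mbf{y}=\mbf{x}$---turns part~2 into a trivial restatement of that added hypothesis rather than a proof of the claim as written, so it should be flagged as establishing a different statement. The defensible versions are: part~2 weakened to ``there exists $\mbf{y}$ with $\aex(\mbf{y};\fml{E},\refd,p)$ and $y_i=v_i$ for all $i\in\fml{F}\setminus\fml{D}$'' (which follows directly from \eqref{eq:wcxpg} and \eqref{eq:ae}, and suffices for the duality and algorithmic results downstream), or part~2 as stated but restricted to $p=0$.
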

  
\begin{proof}[Proof sketch] 
  We prove each claim separately.
  \begin{enumerate}[nosep]
  \item This case is immediate. Given $\refd$ and $\fml{E}$, the
    existence of an adversarial example guarantees, by definition of
    distance-restricted CXp, the existence of a distance-restricted
    CXp.
  \item By definition of distance-restricted CXp, if the features in
    $\fml{D}$ are allowed to change, given the distance restriction,
    then there exists at least one point $\mbf{y}$ such that the
    prediction changes for $\mbf{y}$, and the distance to $\mbf{y}$
    to $\mbf{v}$ does not exceed that of $\mbf{x}$.\qedhere
  \end{enumerate}
\end{proof}

Given the definitions above, we generalize~\eqref{eq:allaxp}
and~\eqref{eq:allcxp} to distance-restricted explanations, as follows:
\begin{align}
  \mbb{A}(\fml{E},\refd;p) = \{\fml{X}\subseteq\fml{F}\,|\,\axp(\fml{X};\fml{E},\refd,p)\}
  \label{eq:allaxpg}\\
  \mbb{C}(\fml{E},\refd;p) = \{\fml{X}\subseteq\fml{F}\,|\,\cxp(\fml{X};\fml{E},\refd,p)\}
  \label{eq:allcxpg} 
\end{align}


In turn, this yields the following result regarding minimal hitting
set duality between (W)AXp's and (W)CXp's.

\begin{proposition} \label{prop:duality2}
  Given an explanation problem $\fml{E}$, norm $\pnorm{p}$, and a
  value of distance $\refd>0$ then,
  \begin{enumerate}[nosep]
  \item A set $\fml{X}\subseteq\fml{F}$ is a (distance-restricted) AXp
    iff $\fml{X}$ a minimal hitting set of the CXp's in
    $\mbb{C}(\fml{E},\refd;p)$.
  \item A set $\fml{X}\subseteq\fml{F}$ is a (distance-restricted) CXp
    iff $\fml{X}$ a minimal hitting set of the AXp's in
    $\mbb{A}(\fml{E},\refd;p)$.
  \end{enumerate}
\end{proposition}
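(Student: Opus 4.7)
The plan is to mirror the classical Reiter-style proof of MHS duality used for Proposition~\ref{prop:duality1}, but now relying only on the generalized definitions~\eqref{eq:waxpg} and~\eqref{eq:wcxpg} together with the up-closure/monotonicity guarantee of Proposition~\ref{prop:monoent}. The single technical lemma I would prove first is a complementarity statement: for every $\fml{Z}\subseteq\fml{F}$, $\waxp(\fml{Z};\fml{E},\refd,p)$ holds if and only if $\wcxp(\fml{F}\setminus\fml{Z};\fml{E},\refd,p)$ fails. This is immediate by de Morgan: negating the universal-implication in~\eqref{eq:waxpg} yields precisely the existential-conjunction of~\eqref{eq:wcxpg}, once we observe that $\fml{F}\setminus(\fml{F}\setminus\fml{Z})=\fml{Z}$, so the two lists of fixed coordinates coincide.

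Next I would establish the weak-form hitting-set equivalences: (a) $\fml{X}$ is a WAXp iff $\fml{X}$ intersects every element of $\mbb{C}(\fml{E},\refd;p)$, and (b) $\fml{Y}$ is a WCXp iff $\fml{Y}$ intersects every element of $\mbb{A}(\fml{E},\refd;p)$. For the forward direction of (a), if some CXp $\fml{Y}$ were disjoint from a WAXp $\fml{X}$, then $\fml{Y}\subseteq\fml{F}\setminus\fml{X}$, and up-closure from Proposition~\ref{prop:monoent} would promote $\fml{F}\setminus\fml{X}$ to a WCXp, contradicting the complementarity lemma applied to $\fml{X}$. For the converse I argue contrapositively: if $\fml{X}$ is not a WAXp, complementarity makes $\fml{F}\setminus\fml{X}$ a WCXp; since $\fml{F}$ is finite we can shrink it to a subset-minimal WCXp, which is by definition a CXp disjoint from $\fml{X}$. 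The argument for (b) is fully symmetric, swapping the roles of $\waxp$ and $\wcxp$ and invoking the other half of the complementarity lemma together with up-closure for WAXp's.

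Finally, I restrict (a) and (b) to subset-minimal representatives on both sides. Since an AXp is by definition a subset-minimal WAXp, and an MHS of a family is a subset-minimal hitting set, (a) descends to: $\fml{X}$ is an AXp iff $\fml{X}$ is an MHS of $\mbb{C}(\fml{E},\refd;p)$. Dually, (b) gives the second claim for CXp's. The main (and mild) obstacle is the bridge between hitting-set disjointness and the existence of a weak explanation on the other side: without up-closure a set disjoint from every CXp might fail to be a WAXp, and without the finiteness of $\fml{F}$ we could not reduce a WCXp to an actual CXp. Everything else is a routine unwinding of the generalized definitions, so the substantive content of the proof is concentrated in the one-line de Morgan observation, which is exactly what fails in the unrestricted setting only when the distance constraint is ignored.
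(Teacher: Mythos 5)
Your proof is correct and follows essentially the same route as the paper's own (sketch of a) proof: a Reiter-style argument in which the complementarity of $\waxp(\fml{Z})$ and $\wcxp(\fml{F}\setminus\fml{Z})$, together with the up-closure from \cref{prop:monoent}, yields the hitting-set characterization at the level of weak explanations, after which both sides descend to their subset-minimal elements. Your write-up merely makes explicit the de Morgan complementarity lemma and the weak-level equivalence that the paper's sketch uses implicitly (e.g.\ in the step from ``no CXp is left entirely free'' to ``$\fml{H}$ is a WAXp''), which is a welcome tightening rather than a different approach.
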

    
\begin{proof}[Proof sketch]
  We prove each claim separately.
  \begin{enumerate}[nosep]
  \item A set $\fml{X}\subseteq\fml{F}$ is a (distance-restricted) AXp
    iff $\fml{X}$ a minimal hitting set of the CXp's in
    $\mbb{C}(\fml{E},\refd;p)$.
    \begin{itemize}[nosep]
    \item[$\Rightarrow$]
      If $\fml{X}$ is an AXp, then it must be an MHS of $\mbb{C}$.\\
      Claim 1: $\fml{X}$ must hit all sets in $\mbb{C}$. If not, then the
      set of free features would allow changing the prediction; this is
      impossible since $\fml{X}$ is an AXp. \\
      Claim 2: $\fml{X}$ must be minimal; otherwise there would be a
      proper subset of $\fml{X}$ that would be a HS of $\mbb{C}$ and so a
      WAXp (see above); a contradiction.
    \item[$\Leftarrow$]
      If $\fml{H}$ is MHS of $\mbb{C}$, then it must be an AXp.\\
      We proceed as follows. From the definition of WAXP, the features in
      $\fml{H}$ are fixed to the values dictated by $\mbf{v}$. Thus, since
      we fix at least one feature in each set of $\mbb{C}$, we are
      guaranteed not to leave free a set of features representing any
      CXp. Hence, $\fml{H}$ must be a WAXP. Since $\fml{H}$ is a minimal
      hitting set, and so irreducible, then it is an AXp.
    \end{itemize}
  \item A set $\fml{X}\subseteq\fml{F}$ is a (distance-restricted) CXp
    iff $\fml{X}$ a minimal hitting set of the AXp's in
    $\mbb{A}(\fml{E},\refd;p)$.\\
    The proof mimics the previous case.\qedhere
  \end{enumerate}
\end{proof}

\cref{prop:duality2} is instrumental for the enumeration of AXp's and
CXp's, as shown in earlier work in the case of distance-unrestricted
AXp's/CXp's~\cite{inams-aiia20}, since it enables adapting well-known
algorithms for the enumeration of subset-minimal reasons of
inconsistency~\cite{lpmms-cj16}.

\begin{example}
  For the running example, we have that
  $\mbb{A}(\fml{E},1;1)=\mbb{C}(\fml{E},1;1)=\{\{1\}\}$.
\end{example}

The definitions of distance-restricted AXp's and CXp's also reveal
novel uses for abductive \& contrastive explanations.

For a given distance $\refd$, an AXp represents an irreducible
sufficient reason for the ML models not to have an adversarial
example.

The number of distance-restricted AXp's is non-decreasing with the
distance $\refd$.

\begin{proposition} \label{prop:cxpdist}
Let $0<\refd<\refdg$, then
\[\mbb{WC}(\fml{E},\refd;p)\subseteq\mbb{WC}(\fml{E},\refdg;p)\]
where $\mbb{WC}$ denotes the set of weak CXp.
We cannot deduce $\mbb{C}(\fml{E},\refd;p)\subseteq\mbb{C}(\fml{E},\refdg;p)$.
There is an counterexample by considering $l_\infty$.
Suppose we have a function $\kappa(x_1,x_2)$, where $x_1\in[0,1]$ and $x_2\in[0,1]$ and 
such that $\kappa(1,1) = 1$, $\kappa(0.5,0.5) = 0$, $\kappa(0,1) = 0$, $\kappa(1,0) = 0$.
Let us use $l_\infty$, suppose $\epsilon_1 = 0.5$ and $\epsilon_2 = 1$.
Clearly, $\epsilon_1 < \epsilon_2$. Consider the instance ((1,1),1).
If we use $\epsilon_1$, we can find an AEx $(0.5,0.5)$, from which we deduce that $\{3,4\}$ is a $0.5$-CXp.
If we use $\epsilon_2$, we can find three AEx $(0.5,0.5)$, $(1,0)$, $(0,1)$, but then we can deduce two $1$-CXps, i.e. $\{3\}$ and $\{4\}$.
\end{proposition}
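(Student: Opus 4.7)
The proof naturally splits into two independent parts, corresponding to the positive inclusion for weak CXps and the counterexample that blocks the analogous inclusion for (minimal) CXps.

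The first part is essentially a one-line argument based on monotonicity of the existential constraint in~\eqref{eq:wcxpg}. Given $\fml{Y}\in\mbb{WC}(\fml{E},\refd;p)$, by definition there is some witness $\mbf{x}\in\mbb{F}$ with $\bigwedge_{i\in\fml{F}\setminus\fml{Y}}(x_i=v_i)$, $||\mbf{x}-\mbf{v}||_p\le\refd$ and $\kappa(\mbf{x})\ne c$. Since $\refd<\refdg$, the same $\mbf{x}$ certifies $||\mbf{x}-\mbf{v}||_p\le\refdg$, so $\wcxp(\fml{Y};\fml{E},\refdg,p)$ holds and $\fml{Y}\in\mbb{WC}(\fml{E},\refdg;p)$. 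Nothing more is needed here.

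The second part requires exhibiting a classifier and an instance where enlarging $\refd$ breaks minimality. The intuition is the following: passing from $\refd$ to $\refdg$ can introduce new weak CXps (by the first part), and if one of these new weak CXps is a \emph{proper subset} of some existing $\refd$-CXp $\fml{Y}$, then $\fml{Y}$ survives as a $\refdg$-weak CXp but ceases to be irreducible. I would formalize the counterexample sketched in the statement as follows. Take $\fml{F}=\{1,2\}$, $\mbb{D}_1=\mbb{D}_2=[0,1]$, norm $l_\infty$, and define $\kappa$ on $[0,1]^2$ by $\kappa(1,1)=1$ and $\kappa(\mbf{x})=0$ for every other point $\mbf{x}$ of interest (e.g.\ one can take $\kappa(\mbf{x})=1$ iff $\mbf{x}=(1,1)$); this agrees with the specified values $\kappa(0.5,0.5)=\kappa(0,1)=\kappa(1,0)=0$. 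Let $(\mbf{v},c)=((1,1),1)$.

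Finally I would verify the two cases at $\refd=0.5$ and $\refdg=1$. At $\refd=0.5$ the $l_\infty$-ball around $\mbf{v}$ is $[0.5,1]^2$, so any adversarial example must differ from $\mbf{v}$ in both coordinates; hence neither $\{1\}$ nor $\{2\}$ is a weak CXp, and the only minimal CXp is $\{1,2\}$, i.e.\ $\{1,2\}\in\mbb{C}(\fml{E},0.5;\infty)$. At $\refdg=1$ the ball becomes the entire unit square, and $(0,1)$ and $(1,0)$ are adversarial examples changing only feature~$1$ and only feature~$2$ respectively, so $\{1\}$ and $\{2\}$ are both weak (and in fact minimal) CXps. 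Consequently $\{1,2\}$ is no longer irreducible and $\{1,2\}\notin\mbb{C}(\fml{E},1;\infty)$, witnessing $\mbb{C}(\fml{E},0.5;\infty)\not\subseteq\mbb{C}(\fml{E},1;\infty)$.

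The only real obstacle is bookkeeping around the specification of $\kappa$: one must choose the off-corner values so that no additional adversarial examples appear inside $[0.5,1]^2\setminus\{\mbf{v}\}$ (otherwise the minimality argument at $\refd=0.5$ could be spoiled). Once $\kappa$ is pinned down unambiguously, both cases reduce to direct inspection, and no further machinery beyond the definitions of $\wcxp$ and $\cxp$ is needed.
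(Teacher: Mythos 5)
Your first part is correct and is the intended argument: the same witness $\mbf{x}$ for $\wcxp(\fml{Y};\fml{E},\refd,p)$ satisfies $||\mbf{x}-\mbf{v}||_p\le\refd\le\refdg$, so weak CXps are monotone in the distance bound. Your second part follows the same counterexample as the paper (and you rightly correct the paper's feature indices from $\{3,4\}$, $\{3\}$, $\{4\}$ to $\{1,2\}$, $\{1\}$, $\{2\}$), but your concrete instantiation of $\kappa$ breaks the argument. With $\kappa(\mbf{x})=1$ iff $\mbf{x}=(1,1)$, the points $(0.5,1)$ and $(1,0.5)$ lie in the $l_\infty$-ball of radius $0.5$ around $\mbf{v}=(1,1)$, are misclassified, and each differs from $\mbf{v}$ in only one coordinate. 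Hence $\{1\}$ and $\{2\}$ are already weak (indeed minimal) CXps at $\refd=0.5$, your claim that ``any adversarial example must differ from $\mbf{v}$ in both coordinates'' is false for this $\kappa$, and $\{1,2\}$ is not a minimal CXp at $\refd=0.5$; one then gets $\mbb{C}(\fml{E},0.5;\infty)=\mbb{C}(\fml{E},1;\infty)=\{\{1\},\{2\}\}$ and no violation of the inclusion. You explicitly flagged this as ``the only real obstacle'' --- choosing the off-corner values so that no extra adversarial examples appear in $[0.5,1]^2\setminus\{\mbf{v}\}$ --- and then chose a $\kappa$ that fails exactly that requirement.

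The gap is repairable without changing the strategy: you need $\kappa$ to equal $1$ on the two edges $\{(x_1,1):x_1\in[0.5,1]\}$ and $\{(1,x_2):x_2\in[0.5,1]\}$ while still satisfying $\kappa(0.5,0.5)=\kappa(0,1)=\kappa(1,0)=0$. For instance, $\kappa(x_1,x_2)=1$ iff $x_1+x_2>1$ agrees with all four values specified in the proposition, makes $(0.5,0.5)$ the witness forcing $\{1,2\}$ to be the unique $0.5$-CXp (no single-coordinate perturbation within the $0.5$-ball changes the prediction), and at $\refdg=1$ admits $(0,1)$ and $(1,0)$ as adversarial examples so that $\{1\}$ and $\{2\}$ become the CXps and $\{1,2\}$ loses irreducibility. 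With that substitution your verification by direct inspection goes through as written.
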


%
%

\section{From Robustness to Explainability} \label{sec:rob2xp}

This section shows that any tool for finding adversarial examples can
be used for computing distance-restricted abductive explanations,
provided the tool respects a few properties.
We detail two algorithms, both of which build on the monotonicity of 
entailment. The first one is a fairly standard solution used in
explainability, with the main difference being the use of a tool
finding adversarial examples.
The second algorithm has not been used in formal explainability, and
aims to target explanation problems with a large number of features.

\jnoteF{Extension(s):\\
  Explanation for why classifier is robust to a specific type of
  attack.}

\paragraph{Required properties of robustness tool.}
%
This section shows that distance-restricted explanations (AXp's and
CXp's) can be computed using \emph{any} existing tool for deciding the
existence of adversarial examples, provided such tool respects the
following properties:
\begin{enumerate}[nosep]
\item Features can be fixed, i.e.\ the value of some feature $i$ can
  be set to the value dictated by $\mbf{v}$, using a constraint on
  the inputs of the form $x_i=v_i$. \label{it:robt:01}
\item The robustness tool is sound, i.e.\ any reported adversarial
  example $\mbf{x}$ respects~\eqref{eq:ae}. \label{it:robt:02}
\item The robustness tool is complete, i.e.\ if there exists an
  adversarial example, i.e.\ some $\mbf{x}$ such that $\aex(\mbf{x})$
  holds true, then it will be reported. \label{it:robt:03}
\end{enumerate}

We will investigate later the impact of relaxing
property~\cref{it:robt:03}.

In the rest of the paper, we will use
$\robt(\refd,\fml{S};\fml{E},p)$ to represent a call to a
robustness oracle that decides the existence of a distance-$\refd$
adversarial example when the features in set $\fml{S}$ are fixed.

\paragraph{Computing distance-restricted AXp's.}
%
We now detail two algorithms for computing an $\refd$-AXp. These
algorithms hinge on the property that the definitions of $\refd$-AXp 
(and $\refd$-CXp) are monotonic.

\begin{algorithm}[t]
  \begin{flushleft}
  \hspace*{\algorithmicindent}
  \textbf{Input}: {
    Arguments: 
    $\epsilon$;
    Parameters: 
    $\fml{E}$,
    $p$}\\
  \hspace*{\algorithmicindent}
  \textbf{Output}: {One AXp $\fml{S}$}
\end{flushleft}

\begin{algorithmic}[1]
  \Function{$\findaxpdel$}{$\epsilon;\fml{E},p$}
  \State{$\fml{S}\gets\fml{F}$}
  \Comment{Initially, no feature is allowed to change}
  \For{$i\in\fml{F}$}
  \State{$\fml{S}\gets\fml{S}\setminus\{i\}$}
  \State{$\outc=\robt(\epsilon,\fml{S};\fml{E},p)$}
  \If{$\outc$}
  \State{$\fml{S}\gets\fml{S}\cup\{i\}$}
  \EndIf
  \EndFor
  \State{\tbf{return} $\fml{S}$}
  \Comment{$\fml{S}$ suffices for prediction}
  \EndFunction
\end{algorithmic}

  \caption{Linear search algorithm to find AXp using adversarial
    example search}
  \label{alg:del}
\end{algorithm}

\cref{alg:del} describes a fairly standard approach for computing an
AXp, adapted to use an oracle for deciding the existence of
adversarial examples given norm $\pnorm{m}$ and value $\refd$ of
distance%
\footnote{
This algorithm mimics the well-known deletion algorithm for finding
subset-minimal explanations for infeasibility~\cite{chinneck-jc91},
but can be traced to earlier work in other
domains,e.g.~\cite{valiant-cacm84}.}.
At each step, one feature is decided not to be fixed. If the oracle
answers that an adversarial example exists, then the feature is
important and is re-added to the set $\fml{S}$. As can be concluded,
the algorithm queries the oracle $m$ times.

\begin{proposition} \label{prop:axp:del}
  \cref{alg:del} computes an AXp given the instance $(\mbf{v},c)$.
\end{proposition}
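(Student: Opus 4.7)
My plan is to argue correctness of \findaxpdel{} via a loop invariant (showing that the returned $\fml{S}$ is a WAXp) combined with a per-feature irreducibility argument.

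First, I would establish the invariant that after every iteration of the for-loop the current set $\fml{S}$ is an $\epsilon$-WAXp, i.e.\ no adversarial example exists when the features in $\fml{S}$ are fixed to their values in $\mbf{v}$. Initially $\fml{S} = \fml{F}$, so the only $\mbf{x}$ compatible with $\bigwedge_{i \in \fml{S}}(x_i = v_i)$ is $\mbf{x} = \mbf{v}$, and $\kappa(\mbf{v}) = c$; hence $\fml{F}$ is (vacuously) a WAXp. For the inductive step, a feature is permanently removed from $\fml{S}$ only when $\robt$ returns \FALSE, which by soundness and completeness of the oracle (properties~(2) and~(3)) certifies that no adversarial example exists once that feature is freed. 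Thus the invariant is preserved, and on termination the returned $\fml{S}^*$ is a WAXp.

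Next, I would turn to subset-minimality. Denote by $\fml{S}_j$ the value of $\fml{S}$ immediately after feature $j$ has been processed, so that $\fml{S}^* = \fml{S}_m$, and observe that the sequence $\fml{S}_0,\fml{S}_1,\ldots,\fml{S}_m$ is non-increasing under inclusion; in particular $\fml{S}^* \subseteq \fml{S}_{i-1}$ for every $i \in \fml{F}$. Now fix any $i \in \fml{S}^*$. At the iteration when $i$ was processed, the oracle returned $\outc = \TRUE$, which by soundness implies that $\fml{S}_{i-1} \setminus \{i\}$ is \emph{not} a WAXp. Since $\fml{S}^* \setminus \{i\} \subseteq \fml{S}_{i-1} \setminus \{i\}$, the up-closure of $\waxp$ given by \cref{prop:monoent} (in its contrapositive form) yields that $\fml{S}^* \setminus \{i\}$ is also not a WAXp. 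Hence removing any feature from $\fml{S}^*$ destroys the WAXp property, so $\fml{S}^*$ is irreducible and thus an $\epsilon$-AXp.

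The main subtlety will be the direction of monotonicity: the oracle query at iteration $i$ concerns $\fml{S}_{i-1} \setminus \{i\}$, which can be a strict superset of $\fml{S}^* \setminus \{i\}$, so I cannot directly read off irreducibility of $\fml{S}^*$ from the oracle's outputs without explicitly invoking \cref{prop:monoent}. Everything else reduces to straightforward bookkeeping over the $m$ iterations, relying on the three assumed properties of the robustness oracle.
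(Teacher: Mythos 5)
Your proof is correct and follows essentially the same route as the paper's proof sketch: a loop invariant establishing that $\fml{S}$ remains a distance-restricted WAXp, followed by the observation that any feature retained in the final set cannot be dropped. The one place you go beyond the paper is in making explicit the appeal to \cref{prop:monoent} to transfer the oracle's negative verdict on $\fml{S}_{i-1}\setminus\{i\}$ down to $\fml{S}^*\setminus\{i\}$ — the paper's sketch leaves this monotonicity step implicit ("that would have been concluded during the execution of the For loop"), and your version correctly identifies and closes that gap.
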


\begin{proof}[Proof sketch]
  The algorithm maintains an invariant that $\fml{S}$ is a
  distance-restricted WAXp.
  It is plain that the invariant holds after setting $\fml{S}$ to
  $\fml{F}$.
  The algorithm analyzes each feature in some order. If a feature $i$
  is allowed to change and that causes some adversarial example to be
  identified, then the feature must be fixed, since otherwise the set
  of fixed features would not represent a distance-restricted WAXp.
  The obtained set $\fml{S}$ is subset-minimal. Otherwise, if some
  feature $j$ could be dropped form $\fml{S}$, that would have been
  concluded during the execution of the For loop.
\end{proof}

Since complex ML models can be defined on a large number of features,
\cref{alg:qxp} proposes an alternative solution, which mimics the
well-known QuickXplain algorithm for finding minimal explanations for
over-constrained problems~\cite{junker-aaai04}, and which enables
reducing the number of oracle calls when the computed set is much
smaller than the starting set.
Given the tight relationship between finding AXp's and extracting
minimal unsatisfiable subsets (MUSes), then any MUS algorithm can also
be adapted to finding distance-restricted AXp's. Among others, these
include the algorithms insertion-based~\cite{puget-ecai88}, dichotomic 
search~\cite{sais-ecai06}, but also the more recent progression
algorithm~\cite{msjb-cav13}.


\begin{algorithm}[t]
  \begin{flushleft}
  \hspace*{\algorithmicindent}
  \textbf{Input}: {
    Arguments: 
    $\epsilon$;
    Parameters: 
    $\fml{E}$,
    $p$}\\
  \hspace*{\algorithmicindent}
  \textbf{Output}: {One AXp $\fml{S}$}
\end{flushleft}

\begin{algorithmic}[1]
  \Function{$\findaxpqxp$}{$\epsilon;\fml{E},p$}
  \If{\tbf{not} $\robt(\epsilon,\emptyset;\fml{E},p)$}
  \State{\tbf{return} $\emptyset$}
  \EndIf
  \State{$\fml{S}\gets\qxprecur(\emptyset,\fml{F},\False,\epsilon;\fml{E},p)$}
  \State{\tbf{return} $\fml{S}$}
  \EndFunction
  \Function{$\qxprecur$}{$\fml{B},\fml{T},\msf{newB},\epsilon;\fml{E},p$}
  \If{$\msf{newB}$ \tbf{and} \tbf{not} $\robt(\epsilon,\fml{B};\fml{E},p)$}
  \State{\tbf{return} $\emptyset$}
  \EndIf
  \If{$|\fml{T}|=1$}
  \State{\tbf{return} $\fml{T}$}
  \EndIf
  \State{$\mu\gets\lfloor\sfrac{|\fml{T}|}{2}\rfloor$}
  \State{$(\fml{T}_1,\fml{T}_2)\gets(\fml{T}_{1..\mu},\fml{T}_{\mu+1..|\fml{T}|})$}
  \State{$\fml{R}_2\gets\qxprecur(\fml{B}\cup\fml{T}_1,\fml{T}_2,|\fml{T}_1|>0,\epsilon;\fml{E},p)$}
  \State{$\fml{R}_1\gets\qxprecur(\fml{B}\cup\fml{R}_2,\fml{T}_1,|\fml{R}_2|>0,\epsilon;\fml{E},p)$}
  \State{\tbf{return} $\fml{R}_1\cup\fml{R}_2$}
  \EndFunction
\end{algorithmic}

  \caption{QuickXplain-based approach for finding AXp based on
    adversarial example search}
  \label{alg:qxp}
\end{algorithm}

With respect to the QuickXplain algorithm, the same analysis as in the
case of~\cref{prop:axp:del} gives the follow result.

\begin{proposition} \label{prop:axp:qxp}
  \cref{alg:qxp} computes an AXp given the instance $(\mbf{v},c)$. 
\end{proposition}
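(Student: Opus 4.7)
The plan is to prove correctness by structural induction on $|\fml{T}|$ in the recursive function $\qxprecur$. The outer function $\findaxpqxp$ first handles the edge case where $\emptyset$ is already a WAXp, i.e.\ no adversarial example exists at all, in which case the empty set is returned; otherwise it invokes $\qxprecur(\emptyset,\fml{F},\False,\epsilon;\fml{E},p)$, and $\fml{F}$ is trivially a WAXp since fixing every feature forces $\mbf{x}=\mbf{v}$ and hence $\kappa(\mbf{x})=c$.

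I would then establish the loop invariant: at each call to $\qxprecur(\fml{B},\fml{T},\msf{newB},\epsilon;\fml{E},p)$, the set $\fml{B}\cup\fml{T}$ is a WAXp; and upon return of $\fml{R}$, either $\fml{B}$ itself is already a WAXp and $\fml{R}=\emptyset$, or $\fml{R}\subseteq\fml{T}$ is subset-minimal with the property that $\fml{B}\cup\fml{R}$ is a WAXp. The $\msf{newB}$ flag governs when the sufficiency test on $\fml{B}$ is actually worth re-issuing; this is the standard QuickXplain trick that keeps the number of oracle calls bounded by $O(|\fml{S}|\cdot\log(m/|\fml{S}|))$.

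The base case $|\fml{T}|=1$ is immediate: $\fml{B}\cup\fml{T}$ is a WAXp by the entry invariant, while the $\msf{newB}$ check guarantees $\fml{B}$ alone is not, so the unique element of $\fml{T}$ is essential and minimal. The recursive case is where the main work lies: after splitting $\fml{T}$ into $(\fml{T}_1,\fml{T}_2)$, the first recursive call returns $\fml{R}_2$ that is minimal in $\fml{T}_2$ relative to background $\fml{B}\cup\fml{T}_1$, and the second returns $\fml{R}_1$ that is minimal in $\fml{T}_1$ relative to background $\fml{B}\cup\fml{R}_2$. For an element $r\in\fml{R}_1$, minimality of $\fml{R}_1\cup\fml{R}_2$ with respect to $\fml{B}$ follows directly from the induction hypothesis on the second call. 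For $r\in\fml{R}_2$, the induction hypothesis on the first call gives that $\fml{B}\cup\fml{T}_1\cup(\fml{R}_2\setminus\{r\})$ is not a WAXp; since $\fml{R}_1\subseteq\fml{T}_1$, the smaller set $\fml{B}\cup\fml{R}_1\cup(\fml{R}_2\setminus\{r\})$ also fails to be a WAXp by monotonicity of $\waxp$ (\cref{prop:monoent}). This monotonicity step, which reconciles the two recursive branches and justifies discarding features from $\fml{T}_1$ after $\fml{R}_2$ has been computed, is the delicate part of the argument.

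Once the invariant is established, correctness of $\findaxpqxp$ follows by instantiating it at the top-level call with $\fml{B}=\emptyset$: the returned $\fml{R}$ is subset-minimal such that $\fml{R}$ is a WAXp, which is exactly an $\refd$-AXp of $(\mbf{v},c)$ for norm $\pnorm{p}$. Termination is clear from the fact that each recursive call strictly reduces $|\fml{T}|$.
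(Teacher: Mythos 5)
Your proof is correct, and it is considerably more than what the paper provides: the paper offers no dedicated argument for \cref{prop:axp:qxp}, stating only that ``the same analysis as in the case of \cref{prop:axp:del}'' applies, i.e.\ it defers to the loop invariant of the linear deletion algorithm and, implicitly, to the known correctness of QuickXplain. That deferral is not literally transferable --- \cref{alg:del} is proved via a linear loop invariant (``$\fml{S}$ is always a WAXp''), whereas \cref{alg:qxp} requires a recursion invariant --- and your structural induction supplies exactly the missing argument: the precondition that $\fml{B}\cup\fml{T}$ is a WAXp while $\fml{B}$ alone is not (with the $\msf{newB}$ flag tracking when the latter fact is already certified by the caller), the postcondition that the returned $\fml{R}\subseteq\fml{T}$ is minimal relative to $\fml{B}$, and, crucially, the reconciliation of the two recursive branches for $r\in\fml{R}_2$ via the up-closedness of $\waxp$ (\cref{prop:monoent}), since $\fml{B}\cup\fml{R}_1\cup(\fml{R}_2\setminus\{r\})$ is contained in the set $\fml{B}\cup\fml{T}_1\cup(\fml{R}_2\setminus\{r\})$ that the first recursive call certifies is not a WAXp. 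Both your argument and the paper's intended one rest on the same two pillars --- the WAXp invariant and monotonicity --- but yours is the genuine proof where the paper has only a pointer. The one place to tighten the write-up is the base case: when $\msf{newB}$ is \False{} no oracle call is issued, so the claim that ``the $\msf{newB}$ check guarantees $\fml{B}$ alone is not a WAXp'' should be stated as part of the entry invariant (the caller has already established it, e.g.\ via the initial $\robt(\epsilon,\emptyset;\fml{E},p)$ test in $\findaxpqxp$) rather than as a consequence of the check itself; this is a presentational point, not a gap.
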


\paragraph{Relaxing property~\cref{it:robt:03}.}
%
The algorithms described above call an oracle for adversarial examples
a number of times. What can be done when the oracle is unable to find
an adversarial example, or prove its non-existence within a given
timeout? As the experiments confirm, this can happen when the oracle
analyzes complex problems.
Property~\cref{it:robt:03} can be relaxed as follows. (We
consider~\cref{alg:del}, but~\cref{alg:qxp} could be adapted in a
similar way.)
If the oracle for adversarial examples does not terminate within the
allocated running time for a given feature $i$, the feature is
declared to yield an adversarial example, and so it is re-added to the
set $\fml{S}$. Thus, we are guaranteed not to drop features for which
there might exist adversarial examples. Therefore, the final reported
set of features, albeit not guaranteed to be an AXp, is certainly a
WAXp.

\begin{proposition} \label{prop:axp:del:p3}
  If property~\cref{it:robt:03} does not hold, \cref{alg:del}
  computes a WAXp given the instance $(\mbf{v},c)$.
\end{proposition}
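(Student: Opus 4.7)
The plan is to show that \cref{alg:del}, when modified to handle oracle failures conservatively (i.e., treat any inconclusive oracle answer the same way as a positive answer, thereby re-adding feature $i$ to $\fml{S}$), maintains the loop invariant that $\fml{S}$ is a distance-restricted WAXp throughout execution. The desired conclusion then follows immediately upon loop termination.

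First, I would establish the base case: initially $\fml{S}=\fml{F}$, and $\waxp(\fml{F};\fml{E},\refd,p)$ holds trivially because the conjunction $\bigwedge_{i\in\fml{F}}(x_i=v_i)$ forces $\mbf{x}=\mbf{v}$, and by assumption $\kappa(\mbf{v})=c$. Next, I would carry out the inductive step for a generic iteration in which feature $i$ is tentatively removed from $\fml{S}$, yielding $\fml{S}'=\fml{S}\setminus\{i\}$. There are three possible outcomes of the oracle call:
\begin{enumerate}[nosep]
\item The oracle soundly certifies no adversarial example exists with the features in $\fml{S}'$ fixed. By property~\cref{it:robt:02} (soundness), $\waxp(\fml{S}';\fml{E},\refd,p)$ holds, and $\fml{S}$ is updated to $\fml{S}'$, preserving the invariant.
\item The oracle soundly reports an adversarial example. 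Then $i$ is re-added, $\fml{S}$ is unchanged, and the invariant persists by the induction hypothesis.
\item The oracle fails (e.g., times out). Under the relaxed rule, $i$ is conservatively re-added, so $\fml{S}$ is unchanged, and the invariant persists.
\end{enumerate}
In every case, the invariant is preserved. Upon termination of the For loop, $\fml{S}$ is therefore a WAXp.

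Finally, I would observe why irreducibility (i.e., the AXp property) can fail under the relaxation: a feature $i$ that was actually dispensable could be kept in $\fml{S}$ solely because the oracle failed to certify its removal. Monotonicity of $\waxp$ (\cref{prop:monoent}) ensures that any superset of a WAXp is also a WAXp, so keeping such features does not break the WAXp property, but does obstruct subset-minimality. Hence the best guarantee one can obtain is exactly the one stated: the output is a WAXp, possibly not an AXp.

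The proof is essentially a loop-invariant argument, so there is no deep obstacle; the only delicate point is making precise the conservative handling of oracle failure (conflating ``timeout'' with ``adversarial example found''), because it is exactly this conflation that leverages the one-sidedness of soundness without needing completeness.
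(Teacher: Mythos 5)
Your proof is correct and follows essentially the same route as the paper, which justifies the claim by the same loop-invariant observation: the conservative rule of re-adding feature $i$ on any oracle timeout means a feature is only ever dropped when the oracle terminates and certifies the absence of an adversarial example, so $\fml{S}$ remains a WAXp throughout and only subset-minimality may be lost. One minor slip: in your first case, the guarantee that a terminating ``no adversarial example'' answer is trustworthy is the contrapositive of completeness (property~\cref{it:robt:03}) restricted to terminating runs, not soundness (property~\cref{it:robt:02}), which only says that \emph{reported} adversarial examples are genuine; the argument still goes through because the relaxation concerns only non-termination, not incorrect terminating answers.
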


For~\cref{prop:axp:del:p3}, the quality of the computed WAXp depends
on the number of times the robustness oracle times out.

\paragraph{Contrastive explanations \& enumeration.}
%
\cref{alg:cxp:del} summarizes one algorithm for finding one
$\refd$-CXp. (The algorithm mimics ~\cref{alg:del} for finding one
$\refd$-AXp.) 
\begin{algorithm}[t]
  \begin{flushleft}
  \hspace*{\algorithmicindent}
  \textbf{Input}: {
    Arguments: 
    $\epsilon$;
    Parameters: 
    $\fml{E}$,
    $p$}\\
  \hspace*{\algorithmicindent}
  \textbf{Output}: {One CXp $\fml{S}$}
\end{flushleft}

\begin{algorithmic}[1]
  \Function{$\findcxpdel$}{$\epsilon;\fml{E},p$}
  \State{$\fml{S}\gets\fml{F}$}
  \Comment{Initially, no feature is fixed}
  \For{$i\in\fml{F}$}
  \State{$\fml{S}\gets\fml{S}\setminus\{i\}$}
  \State{$\outc=\robt(\epsilon,\fml{F}\setminus\fml{S};\fml{E},p)$}
  \If{\tbf{not}~$\outc$}
  \State{$\fml{S}\gets\fml{S}\cup\{i\}$}
  \EndIf
  \EndFor
  \State{\tbf{return} $\fml{S}$}
  \Comment{$\fml{S}$ suffices for changing prediction}
  \EndFunction
\end{algorithmic}

  \caption{Linear search algorithm to find CXp using adversarial
    example search}
  \label{alg:cxp:del}
\end{algorithm}
The differences observed when compared with~\cref{alg:del} result from
the definition of $\refd$-CXp.


As the previous algorithms demonstrate, one can readily adapt the 
explainability algorithms studied in recent years. As a result,
the enumeration of AXp's and CXp's will simply exploit recent
(standard) solutions~\cite{msi-aaai22}, where the AXp/CXp extractors
will find instead $\refd$-AXp's and $\refd$-CXp's.

The same arguments used earlier can be used to conclude that,

\begin{proposition}
  \cref{alg:cxp:del} computes a CXp given the instance $(\mbf{v},c)$. 
\end{proposition}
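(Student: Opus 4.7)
The plan is to mirror the proof of \cref{prop:axp:del}, now tracking a weak contrastive explanation instead of a weak abductive one. I would first establish the loop invariant that after each iteration of the For loop, the current set $\fml{S}$ satisfies $\wcxp(\fml{S};\fml{E},\refd,p)$---that is, an adversarial example still exists when the features in $\fml{F}\setminus\fml{S}$ are fixed to $\mbf{v}$. The base case is the initialization $\fml{S}\gets\fml{F}$; since a CXp can only exist at all when an adversarial example exists for the unconstrained problem, this is exactly $\wcxp(\fml{F};\fml{E},\refd,p)$. For the inductive step, when feature $i$ is processed, the oracle call (performed after tentatively removing $i$ from $\fml{S}$) decides precisely whether $\fml{S}\setminus\{i\}$ is still a WCXp. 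By soundness and completeness of $\robt$, if the oracle reports an adversarial example then the tentative removal is safe and the invariant is preserved; otherwise $i$ is restored and the invariant holds trivially.

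For subset-minimality, I would fix any $i$ in the final set $\fml{S}$ and let $\fml{S}_i$ denote the working set at the moment $i$ was processed. Since the algorithm only drops features permanently (when an adversarial example is still found) or restores the current $i$, the set $\fml{S}$ is non-increasing across iterations, so $\fml{S}\subseteq\fml{S}_i$. The fact that $i$ was re-added means the oracle reported no adversarial example for the fixed set $\fml{F}\setminus(\fml{S}_i\setminus\{i\})$, i.e., $\fml{S}_i\setminus\{i\}$ is not a WCXp. Invoking \cref{prop:monoent} (WCXp is up-closed in the free-feature set), its subset $\fml{S}\setminus\{i\}$ cannot be a WCXp either. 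Hence every feature in the final $\fml{S}$ is essential, so $\fml{S}$ is irreducible, and therefore a CXp.

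The main delicate point is getting the direction of monotonicity right. The existential quantifier in~\eqref{eq:wcxpg} ranges over points that may differ from $\mbf{v}$ only on the free-feature set, so enlarging that set can only add candidate witnesses and never remove them; this gives up-closure of WCXp and, by contraposition, down-closure of its negation. This same up-closure justifies both the loop invariant and the minimality argument, and is what permits a single linear pass through the features to compute an irreducible CXp using only $m$ oracle calls.
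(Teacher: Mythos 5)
Your proof is correct and follows essentially the same route the paper intends: the paper proves this proposition by appeal to "the same arguments used earlier," i.e., the loop-invariant-plus-monotonicity argument given for \cref{prop:axp:del}, which you reproduce for the WCXp predicate with the roles of fixed and free features exchanged. Your explicit use of \cref{prop:monoent} to justify subset-minimality is a more careful rendering of the paper's terser claim that any droppable feature "would have been concluded during the execution of the For loop," but it is the same underlying argument, not a different one.
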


Finally, duality enables the use of standard SAT-based algorithms for
the iterative computation of AXp's and CXp's, as illustrated in
earlier work~\cite{inams-aiia20}. It should be underlined that the
basic enumeration algorithm mimics the MARCO MUS enumeration
algorithm~\cite{lpmms-cj16}, the main difference being the algorithms
used for finding one AXp and one CXp, which are proposed in this
paper.

\section{Experiments} \label{sec:res}

This section assesses the improvement in the scalability of computing
abductive explanations. ()
Instead of computing plain AXp's, for which existing solutions scale
up to a few tens (i.e.\ $\sim20$) of ReLU units~\cite{inms-aaai19}, we
compute $\refd$-AXp's using the Marabou~\cite{barrett-cav19}
robustness tool.
We use the same DNNs studied in earlier
work~\cite{julian2016policy,barrett-cav17,barrett-cav19},
because of their size (more than an order of magnitude larger than
what existing formal XAI approaches can explain), but also to show how
the achieved performance correlates with the practical efficiency of
the robustness tool%
\footnote{Other robustness tools and other DNNs could be considered.
Similarly, the computation of CXp's and the enumeration of AXp's/CXp's
could also be assessed. However, the goal of this section is solely to
illustrate the obtained scalability gains.}.

\begin{figure*}
\centering
\begin{minipage}{0.475\textwidth}
\includegraphics[width=\linewidth]{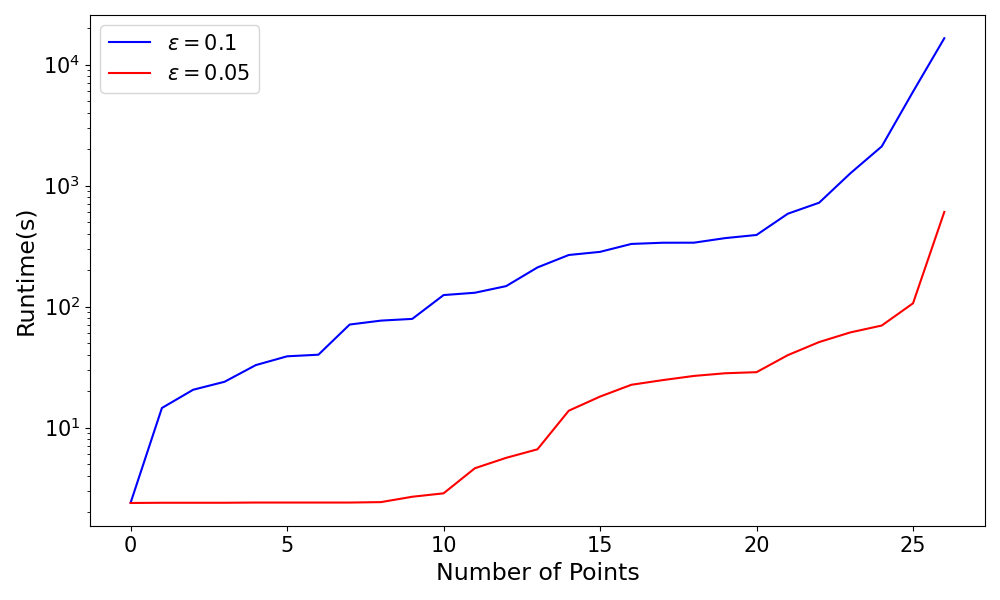} \subcaption{Runtime for ACASXU\_1} \label{fig:marabou1}
\end{minipage}\hfill
\begin{minipage}{0.475\textwidth}
\includegraphics[width=\linewidth]{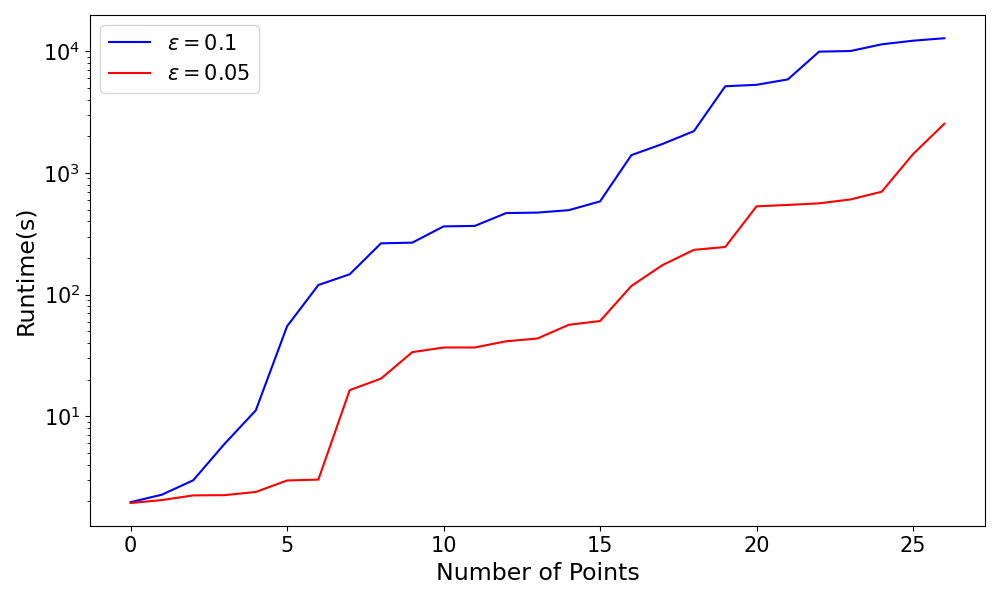} \subcaption{Runtime for ACASXU\_2} \label{fig:marabou2}
\end{minipage}

\vspace{1pt} 

\begin{minipage}{0.475\textwidth}
\includegraphics[width=\linewidth]{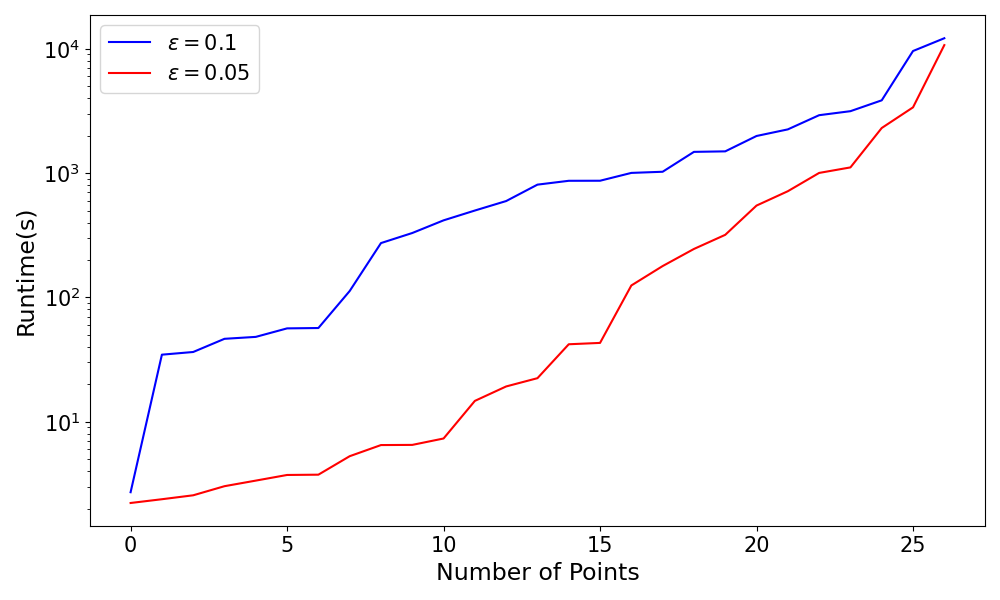} \subcaption{Runtime for ACASXU\_3} \label{fig:marabou3}
\end{minipage}\hfill
\begin{minipage}{0.475\textwidth}
\includegraphics[width=\linewidth]{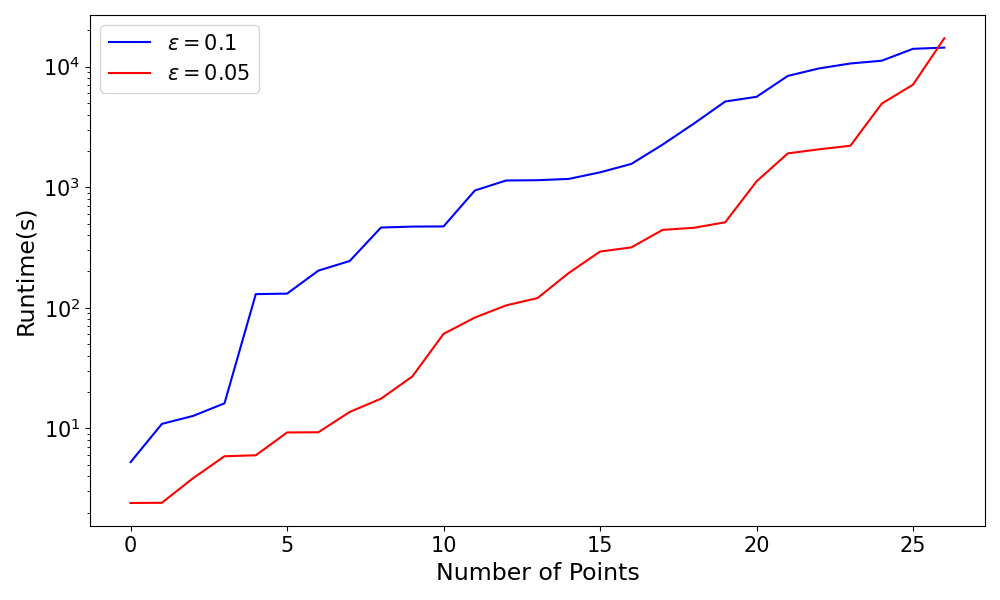} \subcaption{Runtime for ACASXU\_4} \label{fig:marabou4}
\end{minipage}

\vspace{1pt} 

\begin{minipage}{0.475\textwidth}
\includegraphics[width=\linewidth]{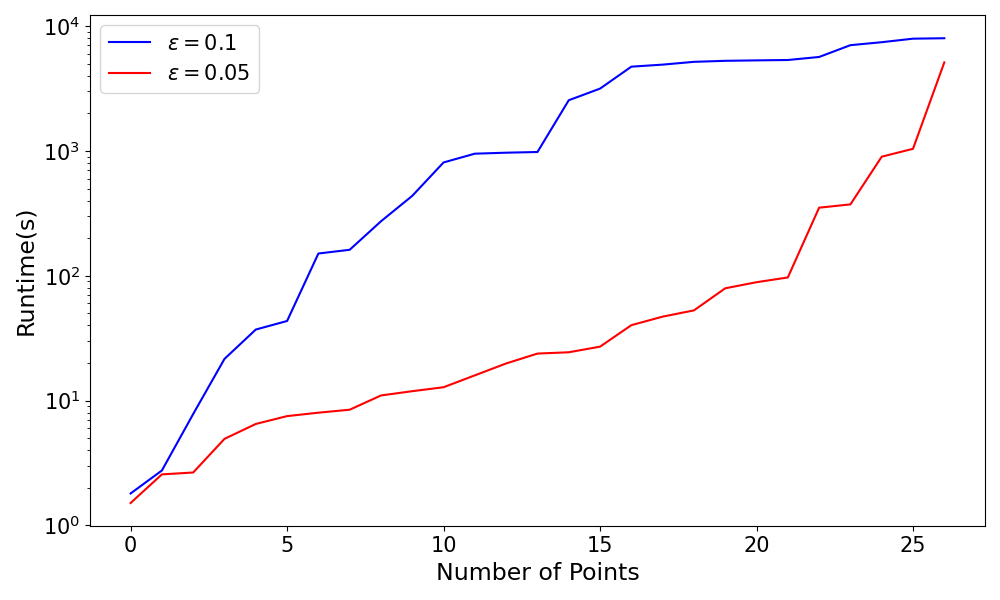} \subcaption{Runtime for ACASXU\_5} \label{fig:marabou5}
\end{minipage}

\caption{Runtime of Computing $\epsilon$-AXp for ACASXU DNNs} \label{fig:marabou}
\end{figure*}

\paragraph{Experimental setup.}
The evaluation includes 45 publicly available pre-trained ACAS Xu DNNs~\cite{julian2016policy}%
\footnote{https://github.com/NeuralNetworkVerification/Marabou}.
Each of these networks consists of five features and five classes. 
The DNNs are fully connected, utilizing ReLU activation functions, and comprising six hidden layers with a total of 300 ReLU nodes each.
For each DNN, we randomly select three points from the normalized feature space 
and compute one $\refd$-AXp ($\refd \in \{0.05, 0.1\}$ for each point
using~\cref{alg:del}%
\footnote{In practice, the choice of $\refd$ depends on the coverage
intended for the computed explanations or alternatively, for the
degree of robustness that is intended for the given instance.}.
(The choice of algorithm was based on the number of features of the
DNNs.)
The timeout for deciding one WAXp predicate was set to 4800 seconds.
Additionally, a prototype of the proposed algorithm was implemented in Perl. 
The robustness tool we utilized for the experiments was
Marabou~\cite{barrett-cav19}, with its default norm.
Marabou is an SMT-based tool that can answer queries about a network’s properties 
by transforming these queries into constraint satisfaction problems.
The experiments were conducted on a Linux-based machine equipped with 
a 2-Core Intel(R) Xeon(R) CPU E5-2673 v4 2.30GHz processor and 8~GB of RAM.
The machine was running Ubuntu 20.04.

\paragraph{Results.}
For the experiments~\cref{alg:del} was used, ensuring that each
feature is tested exactly once. Thus, the use of \cref{alg:del}
requires exactly five calls to compute one $\refd$-AXp.
(According to our initial tests, and because of the number of features
in the DNNs considered, the use of QuickXplain would require more
tests.)
\cref{fig:marabou}
summarizes the experimental results for the ACASXU DNNs.
(The supplementary material include more detailed results.)
%
The plots in
\cref{fig:marabou1,fig:marabou2,fig:marabou3,fig:marabou4,fig:marabou5} 
summarize the variation in running time (smallest to largest) for the
different DNNs and for the instances picked.
%
More specifically, 
each plot shows the raw performance of computing $\refd$-AXp for 9
pre-existing instances for each of the ACASXU DNNs.
%
The red curve shows the runtime (in seconds) of computing $0.05$-AXp
while the blue curve shows the runtime (in seconds) of computing $0.1$-AXp.
(Note that the runtime axis is scaled logarithmically.)
Unsurprisingly, computing $0.1$-AXp is more time consuming
as the feature space for searching counterexamples is larger.

\section{Conclusions} \label{sec:conc}

This paper demonstrates the existence of a tight relationship between
adversarial examples and (rigorous) abductive (and contrastive)
explanations.
This relationship enables the use of off-the-shelf robustness
algorithms for finding formal explanations, namely abductive
explanations.
Furthermore, the paper proposes algorithms for computing abductive (or
contrastive) explanations that use an oracle for adversarial examples,
and illustrates how explanations (namely abductive explanations) can
be enumerated using existing algorithms.
These results hinge on generalized definitions of formal explanations
to account for bounds on norm $\pnorm{p}$ distance when computing
explanations.

The experiments illustrate the gains in scalability that are obtained
by using existing tools for finding adversarial examples.
Whereas the current state of the art for rigorously explaining neural
networks is bounded by a few tens of ReLU
units~\cite{inms-aaai19,katz-tacas23}, this paper shows that
distance-restricted abductive explanations can be computed for neural
networks with several hundreds of ReLU units, i.e.\ more than an order
of magnitude improvement.
%
%
The obtained gains in scalability address in part one of the main
drawbacks of formal XAI, thereby allowing the use of formal
explanations with significantly more complex ML models, and opening
the perspective of further scalability gains in the near future.
More importantly, the results in this paper bind future improvements
in reasoning about formal explanations to future improvements in
assessing robustness of ML models. In other words, any complex ML
model, for which robustness can be assessed with a tool respecting
the properties outlined in this paper, can in principle be explained.
(taking into account the worst-case number of features that may have
to be analyzed).
Furthermore, it should be highlighted that these observations apply to
\emph{any} ML model, besides those already studied in earlier work on
formal explainability. Besides DNNs with ReLU units, other examples
of ML models include other types of DNNs, SVMs, among others.

A number of lines of research can be envisioned, most related with
tapping on the vast number of existing robustness tools, and assessing
their scalability in the context of explainability. For example, since
some robustness tools offer probabilistic guarantees, it will be
important to assess whether such tools can also be exploited in
computing explanations.
A number of explainability queries have been studied in recent
years~\cite{marquis-kr21}. More efficient reasoning will enable
answering such queries in practice.

%

\paragraph{Acknowledgments.}
This work was supported by the AI Interdisciplinary Institute ANITI,
funded by the French program ``Investing for the Future -- PIA3''
under Grant agreement no.\ ANR-19-PI3A-0004,
and 
by the H2020-ICT38 project COALA ``Cognitive Assisted agile
manufacturing for a Labor force supported by trustworthy Artificial
intelligence''.
%
The work was also influenced by discussions with several colleagues,
including the researchers and collaborators of ANITI's 
DeepLever Chair.
N.\ Narodytska spotted a typo in the pseudo-code of the deletion-based
algorithms. 
JMS also acknowledges the incentive provided by the ERC who, by not
funding this research nor a handful of other grant applications
between 2012 and 2022, has had a lasting impact in framing the
research presented in this paper.

%

\newtoggle{mkbbl}

\settoggle{mkbbl}{false}

\addcontentsline{toc}{section}{References}
\vskip 0.2in
\iftoggle{mkbbl}{
  \bibliographystyle{abbrv}
  \bibliography{refs,xrefs,xtra}
}{
  \input{paper.bibl}
}
\label{lastpage}

\clearpage
\appendix

\section{Appendix} \label{sec:app}

\subsection{More Results}

\begin{table}
\setlength{\tabcolsep}{2pt}
\centering
\caption{
Statistics for computing $\refd$-AXp with $\refd = \{0.1, 0.05\}$ for ACASXU\_1 DNNs.
The column {$|\fml{X}|$} indicates the size of AXp. 
The column {Time} reports the runtime required to compute one $\refd$-AXp.
The column {\#TO} indicates the number of timeouts that occurred during the process of deciding WAXp predicate using the robustness tool.
}
\label{tab:marabou1}
\begin{tabular}{crcrr|crr}
\toprule[1.2pt]
DNN & pts & $|\fml{X}|$ & Time & \#TO & $|\fml{X}|$ & Time & \#TO \\ \midrule[0.8pt]
\multicolumn{5}{c}{$\refd=0.1$} & \multicolumn{3}{c}{$\refd=0.05$} \\
\midrule[0.5pt]
\multirow{3}{*}{N1\_1} & \#1 & 0 & 2.39 & 0 & 0 & 2.39 & 0 \\
 & \#2 & 4 & 32.82 & 0 & 0 & 2.68 & 0 \\
 & \#3 & 1 & 1264.17 & 0 & 1 & 26.71 & 0 \\
\midrule
\multirow{3}{*}{N1\_2} & \#1 & 4 & 147.56 & 0 & 3 & 28.69 & 0 \\
 & \#2 & 0 & 721.40 & 0 & 0 & 2.38 & 0 \\
 & \#3 & 0 & 585.31 & 0 & 0 & 50.88 & 0 \\
\midrule
\multirow{3}{*}{N1\_3} & \#1 & 0 & 210.54 & 0 & 0 & 18.03 & 0 \\
 & \#2 & 0 & 2104.68 & 0 & 0 & 2.86 & 0 \\
 & \#3 & 4 & 20.54 & 0 & 3 & 6.61 & 0 \\
\midrule
\multirow{3}{*}{N1\_4} & \#1 & 2 & 266.79 & 0 & 0 & 39.67 & 0 \\
 & \#2 & 0 & 23.89 & 0 & 0 & 2.40 & 0 \\
 & \#3 & 0 & 40.00 & 0 & 0 & 2.40 & 0 \\
\midrule
\multirow{3}{*}{N1\_5} & \#1 & 3 & 79.09 & 0 & 2 & 61.17 & 0 \\
 & \#2 & 2 & 130.07 & 0 & 1 & 13.78 & 0 \\
 & \#3 & 0 & 329.24 & 0 & 0 & 2.39 & 0 \\
\midrule
\multirow{3}{*}{N1\_6} & \#1 & 0 & 367.90 & 0 & 0 & 2.39 & 0 \\
 & \#2 & 5 & 14.51 & 0 & 5 & 5.62 & 0 \\
 & \#3 & 1 & 16504.72 & 1 & 0 & 4.61 & 0 \\
\midrule
\multirow{3}{*}{N1\_7} & \#1 & 1 & 337.19 & 0 & 0 & 69.68 & 0 \\
 & \#2 & 4 & 38.80 & 0 & 3 & 28.10 & 0 \\
 & \#3 & 2 & 124.43 & 0 & 2 & 106.39 & 0 \\
\midrule
\multirow{3}{*}{N1\_8} & \#1 & 3 & 5966.70 & 0 & 2 & 605.71 & 0 \\
 & \#2 & 0 & 390.52 & 0 & 0 & 2.40 & 0 \\
 & \#3 & 1 & 283.17 & 0 & 0 & 24.66 & 0 \\
\midrule
\multirow{3}{*}{N1\_9} & \#1 & 2 & 76.53 & 0 & 3 & 22.57 & 0 \\
 & \#2 & 0 & 71.03 & 0 & 0 & 2.42 & 0 \\
 & \#3 & 0 & 336.87 & 0 & 0 & 2.40 & 0 \\
\bottomrule[1.2pt]
\end{tabular}
\end{table}


\begin{table}[]
\setlength{\tabcolsep}{2pt}
\centering
\caption{
Statistics for computing $\refd$-AXp for ACASXU\_2 DNNs.
}
\label{tab:marabou2}
\begin{tabular}{crcrr|crr}
\toprule[1.2pt]
DNN & pts & $|\fml{X}|$ & Time & \#TO & $|\fml{X}|$ & Time & \#TO \\ \midrule[0.8pt]
\multicolumn{5}{c}{$\refd=0.1$} & \multicolumn{3}{c}{$\refd=0.05$} \\
\midrule[0.5pt]
\multirow{3}{*}{N2\_1} & \#1 & 4 & 2.27 & 0 & 3 & 2.05 & 0 \\
 & \#2 & 3 & 467.72 & 0 & 4 & 41.36 & 0 \\
 & \#3 & 2 & 146.88 & 0 & 2 & 60.67 & 0 \\
\midrule
\multirow{3}{*}{N2\_2} & \#1 & 4 & 1.97 & 0 & 3 & 2.97 & 0 \\
 & \#2 & 1 & 5151.53 & 1 & 0 & 175.10 & 0 \\
 & \#3 & 2 & 11.19 & 0 & 1 & 16.41 & 0 \\
\midrule
\multirow{3}{*}{N2\_3} & \#1 & 2 & 267.29 & 0 & 1 & 36.79 & 0 \\
 & \#2 & 1 & 10034.32 & 1 & 0 & 1426.61 & 0 \\
 & \#3 & 2 & 5865.55 & 0 & 2 & 246.23 & 0 \\
\midrule
\multirow{3}{*}{N2\_4} & \#1 & 3 & 54.96 & 0 & 4 & 56.48 & 0 \\
 & \#2 & 0 & 263.82 & 0 & 0 & 43.59 & 0 \\
 & \#3 & 4 & 5.93 & 0 & 2 & 36.76 & 0 \\
\midrule
\multirow{3}{*}{N2\_5} & \#1 & 0 & 367.01 & 0 & 0 & 605.17 & 0 \\
 & \#2 & 4 & 363.43 & 0 & 4 & 117.74 & 0 \\
 & \#3 & 1 & 1399.72 & 0 & 1 & 3.02 & 0 \\
\midrule
\multirow{3}{*}{N2\_6} & \#1 & 1 & 12198.47 & 1 & 0 & 233.30 & 0 \\
 & \#2 & 2 & 11391.86 & 1 & 1 & 562.43 & 0 \\
 & \#3 & 2 & 9921.77 & 2 & 0 & 700.76 & 0 \\
\midrule
\multirow{3}{*}{N2\_7} & \#1 & 4 & 2.98 & 0 & 4 & 1.94 & 0 \\
 & \#2 & 0 & 2205.67 & 0 & 0 & 2.39 & 0 \\
 & \#3 & 1 & 5296.30 & 1 & 0 & 33.67 & 0 \\
\midrule
\multirow{3}{*}{N2\_8} & \#1 & 2 & 12782.04 & 2 & 1 & 531.32 & 0 \\
 & \#2 & 2 & 583.23 & 0 & 2 & 545.73 & 0 \\
 & \#3 & 3 & 472.02 & 0 & 2 & 20.40 & 0 \\
\midrule
\multirow{3}{*}{N2\_9} & \#1 & 0 & 119.93 & 0 & 2 & 2534.52 & 0 \\
 & \#2 & 2 & 494.13 & 0 & 0 & 2.25 & 0 \\
 & \#3 & 0 & 1731.25 & 0 & 0 & 2.24 & 0 \\
\bottomrule[1.2pt]
\end{tabular}
\end{table}


\begin{table}[]
\setlength{\tabcolsep}{2pt}
\centering
\caption{
Statistics for computing $\refd$-AXp for ACASXU\_3 DNNs.
}
\label{tab:marabou3}
\begin{tabular}{crcrr|crr}
\toprule[1.2pt]
DNN & pts & $|\fml{X}|$ & Time & \#TO & $|\fml{X}|$ & Time & \#TO \\ \midrule[0.8pt]
\multicolumn{5}{c}{$\refd=0.1$} & \multicolumn{3}{c}{$\refd=0.05$} \\
\midrule[0.5pt]
\multirow{3}{*}{N3\_1} & \#1 & 0 &499.75 & 0 & 0 &2.38 & 0 \\
 & \#2 & 1 &595.71 & 0 & 0 &318.62 & 0 \\
 & \#3 & 1 &112.54 & 0 & 0 &43.05 & 0 \\
\midrule
\multirow{3}{*}{N3\_2} & \#1 & 3 &12150.36 & 2 & 2 &10701.09 & 2 \\
 & \#2 & 3 &34.62 & 0 & 2 &14.71 & 0 \\
 & \#3 & 2 &273.49 & 0 & 2 &19.22 & 0 \\
\midrule
\multirow{3}{*}{N3\_3} & \#1 & 0 &48.13 & 0 & 0 &2.22 & 0 \\
 & \#2 & 0 &866.77 & 0 & 0 &2.56 & 0 \\
 & \#3 & 3 &56.72 & 0 & 2 &3.73 & 0 \\
\midrule
\multirow{3}{*}{N3\_4} & \#1 & 2 &416.67 & 0 & 4 &3.36 & 0 \\
 & \#2 & 1 &867.77 & 0 & 0 &22.40 & 0 \\
 & \#3 & 2 &2246.29 & 0 & 0 &5.28 & 0 \\
\midrule
\multirow{3}{*}{N3\_5} & \#1 & 4 &36.38 & 0 & 2 &41.98 & 0 \\
 & \#2 & 3 &1024.77 & 0 & 1 &2304.11 & 0 \\
 & \#3 & 2 &2921.13 & 1 & 2 &245.29 & 0 \\
\midrule
\multirow{3}{*}{N3\_6} & \#1 & 0 &329.57 & 0 & 0 &6.51 & 0 \\
 & \#2 & 3 &807.24 & 0 & 2 &547.89 & 0 \\
 & \#3 & 1 &3146.86 & 0 & 1 &714.69 & 0 \\
\midrule
\multirow{3}{*}{N3\_7} & \#1 & 3 &1481.07 & 0 & 4 &3.03 & 0 \\
 & \#2 & 4 &46.44 & 0 & 1 &1110.75 & 0 \\
 & \#3 & 2 &1495.22 & 0 & 1 &3378.09 & 1 \\
\midrule
\multirow{3}{*}{N3\_8} & \#1 & 2 &9589.72 & 1 & 0 &6.49 & 0 \\
 & \#2 & 0 &56.34 & 0 & 0 &1002.70 & 0 \\
 & \#3 & 2 &3847.70 & 0 & 0 &178.68 & 0 \\
\midrule
\multirow{3}{*}{N3\_9} & \#1 & 2 &1987.55 & 0 & 0 &124.77 & 0 \\
 & \#2 & 2 &2.71 & 0 & 1 &3.75 & 0 \\
 & \#3 & 1 &1003.20 & 0 & 0 &7.33 & 0 \\
\bottomrule[1.2pt]
\end{tabular}
\end{table}


\begin{table}[]
\setlength{\tabcolsep}{2pt}
\centering
\caption{
Statistics for computing $\refd$-AXp for ACASXU\_4 DNNs.
}
\label{tab:marabou4}
\begin{tabular}{crcrr|crr}
\toprule[1.2pt]
DNN & pts & $|\fml{X}|$ & Time & \#TO & $|\fml{X}|$ & Time & \#TO \\ \midrule[0.8pt]
\multicolumn{5}{c}{$\refd=0.1$} & \multicolumn{3}{c}{$\refd=0.05$} \\
\midrule[0.5pt]
\multirow{3}{*}{N4\_1} & \#1 & 0 &1560.59 & 0 & 0 &1117.69 & 0 \\
 & \#2 & 2 &8386.59 & 0 & 0 &292.50 & 0 \\
 & \#3 & 4 &203.31 & 0 & 4 &104.64 & 0 \\
\midrule
\multirow{3}{*}{N4\_2} & \#1 & 4 &10.88 & 0 & 4 &9.23 & 0 \\
 & \#2 & 3 &244.13 & 0 & 1 &13.66 & 0 \\
 & \#3 & 0 &1137.18 & 0 & 0 &194.22 & 0 \\
\midrule
\multirow{3}{*}{N4\_3} & \#1 & 2 &5142.96 & 1 & 1 &511.82 & 0 \\
 & \#2 & 1 &462.88 & 0 & 1 &26.87 & 0 \\
 & \#3 & 2 &3374.66 & 0 & 0 &460.71 & 0 \\
\midrule
\multirow{3}{*}{N4\_4} & \#1 & 2 &1328.44 & 0 & 2 &316.78 & 0 \\
 & \#2 & 0 &1172.15 & 0 & 0 &442.84 & 0 \\
 & \#3 & 4 &129.74 & 0 & 3 &3.86 & 0 \\
\midrule
\multirow{3}{*}{N4\_5} & \#1 & 3 &14066.93 & 2 & 1 &4933.66 & 1 \\
 & \#2 & 3 &16.11 & 0 & 3 &9.27 & 0 \\
 & \#3 & 2 &5623.34 & 1 & 1 &17220.40 & 1 \\
\midrule
\multirow{3}{*}{N4\_6} & \#1 & 3 &130.97 & 0 & 2 &5.86 & 0 \\
 & \#2 & 0 &2257.99 & 0 & 0 &2.41 & 0 \\
 & \#3 & 1 &939.59 & 0 & 0 &82.94 & 0 \\
\midrule
\multirow{3}{*}{N4\_7} & \#1 & 3 &5.24 & 0 & 3 &120.24 & 0 \\
 & \#2 & 3 &12.67 & 0 & 3 &5.97 & 0 \\
 & \#3 & 1 &1143.34 & 0 & 0 &17.60 & 0 \\
\midrule
\multirow{3}{*}{N4\_8} & \#1 & 3 &14398.49 & 2 & 2 &2209.18 & 0 \\
 & \#2 & 1 &471.29 & 0 & 0 &60.75 & 0 \\
 & \#3 & 2 &10629.91 & 2 & 0 &1908.19 & 0 \\
\midrule
\multirow{3}{*}{N4\_9} & \#1 & 2 &11207.43 & 1 & 2 &2063.07 & 0 \\
 & \#2 & 0 &472.86 & 0 & 0 &2.40 & 0 \\
 & \#3 & 2 &9677.28 & 1 & 2 &7092.04 & 1 \\
\bottomrule[1.2pt]
\end{tabular}
\end{table}


\begin{table}[]
\setlength{\tabcolsep}{2pt}
\centering
\caption{
Statistics for computing $\refd$-AXp for ACASXU\_5 DNNs.
}
\label{tab:marabou5}
\begin{tabular}{crcrr|crr}
\toprule[1.2pt]
DNN & pts & $|\fml{X}|$ & Time & \#TO & $|\fml{X}|$ & Time & \#TO \\ \midrule[0.8pt]
\multicolumn{5}{c}{$\refd=0.1$} & \multicolumn{3}{c}{$\refd=0.05$} \\
\midrule[0.5pt]
\multirow{3}{*}{N5\_1} & \#1 & 3 &43.38 & 0 & 3 &47.01 & 0 \\
 & \#2 & 3 &980.40 & 0 & 2 &1040.17 & 0 \\
 & \#3 & 0 &950.57 & 0 & 0 &10.98 & 0 \\
\midrule
\multirow{3}{*}{N5\_2} & \#1 & 1 &5273.58 & 1 & 0 &23.78 & 0 \\
 & \#2 & 1 &3158.12 & 1 & 0 &6.49 & 0 \\
 & \#3 & 3 &37.05 & 0 & 2 &27.02 & 0 \\
\midrule
\multirow{3}{*}{N5\_3} & \#1 & 0 &161.38 & 0 & 0 &2.65 & 0 \\
 & \#2 & 3 &436.70 & 0 & 2 &88.74 & 0 \\
 & \#3 & 2 &2.75 & 0 & 2 &4.93 & 0 \\
\midrule
\multirow{3}{*}{N5\_4} & \#1 & 5 &1.80 & 0 & 4 &1.51 & 0 \\
 & \#2 & 1 &7935.65 & 1 & 0 &79.31 & 0 \\
 & \#3 & 2 &808.67 & 0 & 3 &52.76 & 0 \\
\midrule
\multirow{3}{*}{N5\_5} & \#1 & 0 &968.08 & 0 & 0 &8.44 & 0 \\
 & \#2 & 1 &5175.69 & 1 & 0 &15.91 & 0 \\
 & \#3 & 4 &7.80 & 0 & 4 &2.56 & 0 \\
\midrule
\multirow{3}{*}{N5\_6} & \#1 & 1 &4916.43 & 1 & 1 &5124.36 & 1 \\
 & \#2 & 1 &7434.66 & 1 & 0 &12.78 & 0 \\
 & \#3 & 2 &5353.69 & 0 & 0 &40.16 & 0 \\
\midrule
\multirow{3}{*}{N5\_7} & \#1 & 2 &150.67 & 0 & 2 &11.88 & 0 \\
 & \#2 & 2 &4734.08 & 1 & 0 &7.99 & 0 \\
 & \#3 & 4 &21.56 & 0 & 3 &7.50 & 0 \\
\midrule
\multirow{3}{*}{N5\_8} & \#1 & 3 &273.03 & 0 & 3 &24.36 & 0 \\
 & \#2 & 2 &2548.19 & 0 & 3 &350.95 & 0 \\
 & \#3 & 2 &7999.33 & 1 & 1 &899.21 & 0 \\
\midrule
\multirow{3}{*}{N5\_9} & \#1 & 1 &5317.12 & 1 & 0 &19.82 & 0 \\
 & \#2 & 2 &7039.75 & 1 & 1 &96.85 & 0 \\
 & \#3 & 1 &5661.11 & 1 & 0 &373.02 & 0 \\
\bottomrule[1.2pt]
\end{tabular}
\end{table}

When the value of $\refd$ is sufficiently small, it is more likely that the DNNs are robust against perturbations, and as a result, there may not exist any AXp. 
This can be illustrated by considering the case of \emph{N3\_1} when $\refd = 0.05$ is selected.
Moreover, it should be noted that if there is no $\refd$-AXp for a certain value of $\refd$, it also implies that there is no $\refd'$-AXp 
for any $\refd' < \refd$. 
For example, in the case of \emph{N1\_1} and the first point, there is no $0.1$-AXp, so there is no $0.05$-AXp either.
The runtime of computing $\refd$-AXp is influenced by two factors: 
1) the choice of $\refd$, and 
2) the number of dropped features during the computation of $\refd$-AXp.
As the value of $\refd$ increases, or as the number of dropped features increases,
the runtime of computing one $\refd$-AXp also increases.
This is because larger values of $\refd$ or a larger number of dropped features 
imply a larger search space for finding adversarial examples, which requires more computational effort.
The choice of $\refd$ has a greater impact on the runtime.
In most cases, computing one $0.1$-AXp consumes much more time compared to computing one $0.05$-AXp.
However, the impact of the number of dropped features cannot be overlooked.
For example, in the case of \emph{N2\_4} and the third point, in the case of \emph{N3\_7} and the second points,
as well as in the case of \emph{N4\_5} and the third point,
the runtime of computing one $0.05$-AXp is much larger than that of computing one $0.1$-AXp. 
This highlights the influence of the number of dropped features on the computation time.

\end{document}